\newcolumntype{H}{>{\setbox0=\hbox\bgroup}c<{\egroup}@{}} 
\newcolumntype{L}[1]{>{\raggedright\let\newline\\\arraybackslash\hspace{0pt}}m{#1}}
\newcolumntype{C}[1]{>{\centering\let\newline\\\arraybackslash\hspace{0pt}}m{#1}}
\newcolumntype{R}[1]{>{\raggedleft\let\newline\\\arraybackslash\hspace{0pt}}m{#1}}
\newtheorem{defi}{Definition}
\newcommand{\specialcell}[2][l]{%
  \begin{tabular}[#1]{@{}l@{}}#2\end{tabular}}
\begin{document}

\title{Adjusting for Chance Clustering Comparison Measures}

\author{\name Simone Romano \email simone.romano@unimelb.edu.au
       \AND 
       \name Nguyen Xuan Vinh \email vinh.nguyen@unimelb.edu.au
       \AND 
       \name James Bailey \email baileyj@unimelb.edu.au 
       \AND
       \name Karin Verspoor \email karin.verspoor@unimelb.edu.au              
\AND       
       \addr Dept.\ of Computing and Information Systems, The University of Melbourne, VIC, Australia.
}

\editor{} 

\maketitle

\thispagestyle{empty} 

\begin{abstract}
Adjusted for chance measures are widely used to compare partitions/clusterings of the same data set. In particular, the Adjusted Rand Index (ARI) based on pair-counting, and the Adjusted Mutual Information (AMI) based on Shannon information theory are very popular in the clustering community. 
Nonetheless it is an open problem as to what are the best application scenarios for each measure and guidelines in the literature for their usage are sparse, with the result that users often resort to using both. 
Generalized Information Theoretic (IT) measures based on the Tsallis entropy have been shown to link pair-counting and Shannon IT measures. In this paper, we aim to bridge the gap between adjustment of measures based on pair-counting and measures based on information theory. We solve the key technical challenge of analytically computing the expected value and variance of generalized IT measures. This allows us to propose adjustments of generalized IT measures, which reduce to well known adjusted clustering comparison measures as special cases. Using the theory of generalized IT measures, we are able to propose the following guidelines for using ARI and AMI as external validation indices: ARI should be used when the reference clustering has large equal sized clusters; AMI should be used when the reference clustering is unbalanced and there exist small clusters.

\end{abstract}

\begin{keywords}
Clustering Comparisons, Adjustment for Chance, Generalized Information Theoretic Measures
\end{keywords}

\section{Introduction}

Clustering comparison measures are used to compare partitions/clusterings of the same data set. In the clustering community~\citep{Aggarwal2013}, they are extensively used for external validation when the ground truth clustering is available. A family of popular clustering comparison measures are measures based on pair-counting~\citep{Albatineh2006}.  This category comprises the well known similarity measures Rand Index (RI)~\citep{Rand1971} and the Jaccard coefficient (J)~\citep{Ben2001}.
Recently, information theoretic (IT) measures have been also extensively used to compare partitions~\citep{Strehl2003,Nguyen2010}. Given the variety of different possible measures, it is very challenging to identify the best choice for a particular application scenario~\citep{Wu2009}.

The picture becomes even more complex if adjusted for chance measures are also considered. Adjusted for chance measures are widely used external clustering validation techniques because they improve the interpretability of the results. Indeed, two important properties hold true for adjusted measures: they have constant baseline equal to 0 value when the partitions are random and independent, and they are equal to 1 when the compared partitions are identical. Notable examples are the Adjusted Rand Index (ARI)~\citep{Hubert85} and the Adjusted Mutual Information (AMI)~\citep{Nguyen2009}. It is common to see published research that validates clustering solutions against a reference ground truth clustering with the ARI or the AMI. Nonetheless there are still open problems: \emph{there are no guidelines for their best application scenarios shown in the literature to date and authors often resort to employing them both and leaving the reader to interpret.} 

Moreover, some clustering comparisons measures are susceptible to selection bias: when selecting 
the most similar partition to a given ground truth partition, clustering comparison measures are more likely to select partitions with many clusters~\citep{Romano2014}. In~\cite{Romano2014} it was shown that it is beneficial to perform statistical standardization to IT measures to correct for this bias. In particular, standardized IT measures help in decreasing this bias when the number of objects in the data set is small. Statistical standardization has not been applied to pair-counting measures yet in the literature. We solve this challenge in the current paper, and provide further results about the utility of measure adjustment by standardization.

In this work, we aim to \emph{bridge the gap between the adjustment of pair-counting measures and the adjustment of IT measures}. In~\cite{Furiuchi2006,Simovici2007} it has been shown that generalized IT measures based on the Tsallis $q$-entropy~\citep{Tsallis2009} are a further generalization of IT measures and some pair-counting measures such as RI. In this paper, we will exploit this useful idea to connect ARI and AMI.
Furthermore using the same idea, we can perform statistical adjustment by standardization to a broader class of measures, including pair-counting measures.

A key technical challenge is to analytically compute the expected value and variance for generalized IT measures when the clusterings are random. 
To solve this problem, we propose a technique applicable to a broader class of measures we name $\mathcal{L}_\phi$, which includes generalized IT measures as a special case. This generalizes previous work which provided analytical adjustments for narrower
classes: measures based on pair-counting from the family $\mathcal{L}$~\citep{Albatineh2006}, and measures based on the Shannon's mutual information~\citep{Nguyen2009,Nguyen2010}. Moreover, we define a family of measures $\mathcal{N}_\phi$ which generalizes many clustering comparison measures. For measures belonging to this family, the expected value can be analytically approximated when the number of objects is large. Table \ref{tbl:contribution} summarizes the development of this line of work over the past 30 years and positions our contribution.

In summary, we make the following contributions:
\begin{itemize}
\item We define families of measures for which the expected value and variance can be computed analytically when the clusterings are random;
\item We propose generalized adjusted measures to correct for the baseline property and for selection bias.   This captures existing
well known measures as special cases;
\item We provide insights into the open problem of identifying the best application scenarios for clustering comparison measures, in particular the
application scenarios for ARI and AMI.
\end{itemize}
\begin{figure}[t]
\centering
\includegraphics[scale=1.3]{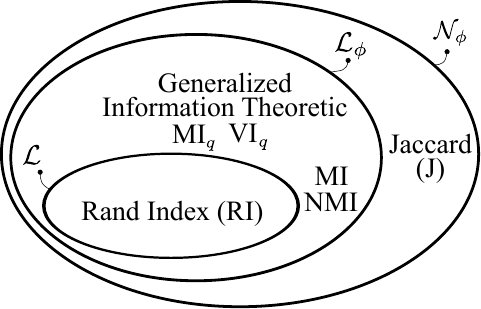}
\caption{Families of clustering comparison measures.} \label{fig:families}
\end{figure}
\begin{table}[t]
\centering
\begin{tabular}{cll}
\toprule
Year & Contribution & Reference \\
\toprule
1985 & Expectation of Rand Index (RI)&~\citep{Hubert85} \\
\hline
2006 & Expectation and variance of $S \in \mathcal{L}$ &~\citep{Albatineh2006} \\
\hline
2009 & Expectation of Shannon Mutual Information (MI)&~\citep{Nguyen2009} \\
\hline
2010 & Expectation of Normalized Shannon MI (NMI) &~\citep{Nguyen2010} \\
\hline
2014 & Variance of Shannon MI &~\citep{Romano2014} \\
\hline
2015 & \specialcell{Expectation and variance of $S \in \mathcal{L}_\phi$\\Asymptotic expectation of $S \in \mathcal{N}_\phi$}& This Work \\
\hline
\end{tabular}
\caption{Survey of analytical computation of measures.} \label{tbl:contribution}
\end{table}

\section{Comparing Partitions} \label{sec:comppart}

Given two partitions (clusterings) $U$ and $V$ of the same data set of $N$ objects, let $\{ u_1, \dots, u_r\}$ and $\{ v_1,\dots,v_c\}$ be the disjoint sets (clusters) for $U$ and $V$ respectively. Let $|u_i| = a_i$ for $i=1,\dots,r$ denote the number of objects in the set $u_i$ and $|v_j| = b_j$ for $j=1,\dots,c$ denote the number of objects in $v_j$. Naturally, $\sum_{i=1}^r a_i = \sum_{j=1}^c b_j = N$. The overlap between the two partitions $U$ and $V$ can be represented in matrix form by a $r \times c$ contingency table $\mathcal{M}$ where $n_{ij}$ represents the number of objects in both $u_i$ and $v_j$, i.e.\ $n_{ij} = |u_i \cap v_j|$. Also, we refer to $a_i = \sum_{i=1}^r n_{ij}$ as the row marginals and to $b_j = \sum_{j=1}^c n_{ij}$ as the column marginals. A contingency table $\mathcal{M}$ is shown in Table~\ref{tbl:contingency}.
\begin{table}[h]
\centering
\begin{tabular}{c|c|ccccc|}
\multicolumn{2}{c}{} & \multicolumn{5}{c}{  $V$     }\\
\cline{3-7}
\multicolumn{2}{c|}{ } & $b_1$ & $\cdots$ & $b_j$ & $\cdots$ & $b_c$ \\
\cline{2-7}
\multirow{2}{*}{     }  & $a_1$ &
$n_{11}$ & $\cdots$ & $\cdot$ & $\cdots$ & $n_{1c}$ \\
& $\vdots$ &
$\vdots$ &  & $\vdots$ &  & $\vdots$ \\
 $U$  & $a_i$ &
$\cdot$ &  & $n_{ij}$ & & $\cdot$ \\
& $\vdots$ &
$\vdots$ & & $\vdots$ & & $\vdots$ \\
& $a_r$ &
$n_{r1}$ & $\cdots$ & $\cdot$ & $\cdots$ & $n_{rc}$ \\
\cline{2-7}
\end{tabular}
\caption{$r \times c$ contingency table $\mathcal{M}$ related to two clusterings $U$ and $V$. $a_i = \sum_j n_{ij}$ are the row marginals and $b_j = \sum_i n_{ij}$ are the column marginals.}\label{tbl:contingency}
\end{table}

\noindent Pair-counting measures between partitions, such as the Rand Index (RI)~\citep{Rand1971}, might be defined using the following quantities: $k_{11}$, the pairs of objects in the same set in both $U$ and $V$; $k_{00}$ the pairs of objects not in the same set in $U$ and not in the same set in $V$; $k_{10}$, the pairs of objects in the same set in $U$ and not in the same set in $V$; and $k_{01}$ the pairs of objects not in the same set in $U$ and in the same set in $V$. All these quantities can be computed using the contingency table $\mathcal{M}$, for example:
\begin{equation}
k_{11} = \frac{1}{2}\sum_{i=1}^r \sum_{j=1}^c n_{ij}(n_{ij} -1), \quad k_{00} = \frac{1}{2}\Big( N^2 + \sum_{i=1}^r \sum_{j=1}^c n_{ij}^2 - \Big( \sum_{i=1}^r a_i^2 + \sum_{j=1}^c b_j^2 \Big) \Big)
\end{equation}
Using $k_{00}$, $k_{11}$, $k_{10}$, and $k_{01}$ it is possible to compute similarity measures, e.g.\ RI, or distance measures, e.g.\ the Mirkin index $\mbox{MK}(U,V) \triangleq \sum_{i} a_i^2 + \sum_j b_j^2 - 2\sum_{i,j} n_{ij}^2$, between partitions~\citep{Meila2007}:
\begin{equation}
\mbox{RI}(U,V) \triangleq (k_{11} + k_{00})/\binom{N}{2}, \quad \mbox{MK}(U,V) = 2(k_{10} + k_{01}) = N(N-1)(1 - \mbox{RI}(U,V))
\end{equation}
Information theoretic measures are instead defined for random variables but can also be used to compare partitions when the we employ the empirical probability distributions associated to $U$, $V$, and the joint partition $(U,V)$. Let $\frac{a_i}{N}$, $\frac{b_j}{N}$, and $\frac{n_{ij}}{N}$ be the probability that an object falls in the set $u_i$, $v_j$, and $u_i \cap v_j$ respectively. We can therefore define the Shannon entropy with natural logarithms for a partition $V$ as follows: $H(V) \triangleq -\sum_j \frac{b_j}{N}\ln{\frac{b_j}{N}}$. Similarly, we can define the entropy $H(U)$ for the partition $U$, the joint entropy $H(U,V)$ for the joint partition $(U,V)$, and the conditional entropies $H(U|V)$ and $H(V|U)$. Shannon entropy can be used to define the well know mutual information (MI) and employ it to compute similarity between partitions $U$ and $V$:
\begin{equation}
\mbox{MI}(U,V) \triangleq H(U) - H(U|V) = H(V) - H(V|U) = H(U) + H(V) - H(U,V)
\end{equation}
On contingency tables, MI is linearly related to $G$-statistics used for likelihood-ratio tests: $ G = 2N\mbox{MI}$. In~\cite{Meila2007}, using the Shannon entropy it was shown that the following distance, namely the variation of information (VI) is a metric:
\begin{equation} \label{eq:vi}
\mbox{VI}(U,V) \triangleq 2H(U,V) -H(U) - H(V) = H(U|V) + H(V|U) = H(U) + H(V) -2\mbox{MI}(U,V)
\end{equation}
 
\subsection{Generalized Information Theoretic Measures} \label{sec:genent}

Generalized Information Theoretic (IT) measures based on the generalized Tsallis $q$-entropy \citep{Tsallis1988} can be defined for random variables~\citep{Furiuchi2006} and also be applied to the task of comparing partitions~\citep{Simovici2007}. Indeed, these measures have also seen recent application in the machine learning community. More specifically, it has been shown that they can act as proper kernels~\citep{Martins2009}. Furthermore, empirical studies demonstrated that careful choice of $q$ yields successful results when comparing the similarity between documents~\citep{Vila2011}, decision tree induction~\citep{Maszczyk2008}, and reverse engineering of biological networks~\citep{Lopes2011}. It is important to note that the Tsallis $q$-entropy is equivalent to the Harvda-Charvat-Dar\'{o}czy generalized entropy proposed in~\cite{Havrda1967, Daroczy1970}. Results available in literature about these generalized entropies are equivalently valid for all the proposed versions.

Given  $q \in \mathbb{R}^+ - \{1\}$, the generalized Tsallis $q$-entropy for a partition $V$ is defined as follows: $H_q(V) \triangleq \frac{1}{q - 1}\big( 1 - \sum_i \big( \frac{b_j}{N}\big)^q \big)$. Similarly to the case of Shannon entropy, we have the joint $q$-entropy $H_q (U,V)$ and the conditional $q$-entropies $H_q(U|V)$ and $H_q(V|U)$. Conditional $q$-entropy is computed according to a weighted average parametrized in $q$. More specifically the formula for $H_q(V|U)$ is:
\begin{equation} \label{eq:hugivenv}
H_q(V|U) \triangleq \sum_{i=1}^r \Big( \frac{a_i}{N} \Big)^q H_q(V|u_i) = \sum_{i=1}^r \Big( \frac{a_i}{N} \Big)^q \frac{1}{q - 1}\Big( 1 - \sum_{j=1}^c \Big( \frac{n_{ij}}{a_i}\Big)^q \Big)
\end{equation}
The $q$-entropy reduces to the Shannon entropy computed in nats for $q \rightarrow 1$.

In~\cite{Furiuchi2006}, using the fact that $q > 1$ implies $H_q(U) \geq H_q(U|V)$, it is shown that non-negative MI can be naturally generalized with $q$-entropy when $q>1$: 
\begin{equation} \label{eq:mi_beta}
\mbox{MI}_q(U,V) \triangleq H_q(U) - H_q(U|V) = H_q(V) - H_q(V|U) = H_q(U) + H_q(V) - H_q(U,V)
\end{equation}
However, $q$ values smaller than 1 are allowed if the assumption that MI$_q(U,V)$ is always positive can be dropped. In addition, generalized entropic measures can be used to defined the generalized variation of information distance ($\mbox{VI}_q$) which tends to VI in Eq. \eqref{eq:vi} when $q \rightarrow 1$:
\begin{equation} \label{eq:vi_beta}
\mbox{VI}_q(U,V) \triangleq H_q(U|V) + H_q(V|U) = 2H_q(U,V) -H_q(U) - H_q(V) = H_q(U) + H_q(V) -2\mbox{MI}_q(U,V)
\end{equation}
In~\cite{Simovici2007} it was shown that $\mbox{VI}_q$ is a proper metric and interesting links were identified between measures for comparing partitions $U$ and $V$. We state these links in Proposition \ref{prop:vi_ri} given that they set the fundamental motivation of our paper:
\begin{restatable}{prop}{propviri}{\citep{Simovici2007}}\label{prop:vi_ri}
When $q = 2$ the generalized variation of information, the Mirkin index, and the Rand index are linearly related: $\mbox{\emph{VI}}_2(U,V) = \frac{1}{N^2}\mbox{\emph{MK}}(U,V) = \frac{N-1}{N}(1 - \mbox{\emph{RI}}(U,V))$.
\end{restatable}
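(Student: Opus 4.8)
The plan is to reduce everything to the empirical sums of squares of the entries and marginals of the contingency table $\mathcal{M}$, exploiting the fact that at $q=2$ the generalized entropy loses its logarithmic character and becomes a purely quadratic functional. First I would specialize the definition $H_q(V) = \frac{1}{q-1}\big(1 - \sum_j (b_j/N)^q\big)$ to $q=2$, where the prefactor $\frac{1}{q-1}$ equals $1$, obtaining $H_2(V) = 1 - \sum_j (b_j/N)^2$, and analogously $H_2(U) = 1 - \sum_i (a_i/N)^2$ together with $H_2(U,V) = 1 - \sum_{i,j} (n_{ij}/N)^2$ from the joint $q$-entropy.

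Next I would substitute these three quantities into the middle expression of the definition of $\mbox{VI}_q$ in Eq.~\eqref{eq:vi_beta}, namely $\mbox{VI}_2(U,V) = 2H_2(U,V) - H_2(U) - H_2(V)$. The additive constants combine as $2\cdot 1 - 1 - 1 = 0$ and cancel, so that only the quadratic sums survive. Pulling out the common factor $N^{-2}$ then leaves $\mbox{VI}_2(U,V) = N^{-2}\big(\sum_i a_i^2 + \sum_j b_j^2 - 2\sum_{i,j} n_{ij}^2\big)$, and the parenthesized quantity is precisely the definition of the Mirkin index $\mbox{MK}(U,V)$ recorded in Section~\ref{sec:comppart}. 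This establishes the first claimed equality $\mbox{VI}_2(U,V) = \frac{1}{N^2}\mbox{MK}(U,V)$.

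For the second equality I would simply invoke the identity $\mbox{MK}(U,V) = N(N-1)\big(1 - \mbox{RI}(U,V)\big)$, already stated in the excerpt, and substitute it in, giving $\frac{1}{N^2}\mbox{MK}(U,V) = \frac{N(N-1)}{N^2}\big(1 - \mbox{RI}(U,V)\big) = \frac{N-1}{N}\big(1 - \mbox{RI}(U,V)\big)$, which is the desired relation.

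There is no genuine obstacle here: the result follows by direct substitution once $q=2$ collapses the entropy to a quadratic form, and the only point requiring care is the bookkeeping of signs and the cancellation of the constant terms when forming $2H_2(U,V) - H_2(U) - H_2(V)$. The conceptual content lies entirely in the two prior observations that the Tsallis $2$-entropy is the quadratic (Gini-type) impurity functional and that the Mirkin index is an affine function of the Rand index.
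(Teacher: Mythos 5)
Your proof is correct and follows essentially the same route as the paper's: expand $\mbox{VI}_2 = 2H_2(U,V) - H_2(U) - H_2(V)$ into the quadratic sums, cancel the constants to recognize $\frac{1}{N^2}\mbox{MK}(U,V)$, and then apply the affine identity $\mbox{MK} = N(N-1)(1-\mbox{RI})$. The only cosmetic difference is that the paper carries out the computation for general $q$ before setting $q=2$, whereas you specialize to $q=2$ at the outset.
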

\noindent Generalized IT measures are not only a generalization of IT measures in the Shannon sense but also a generalization of pair-counting measures for particular values of $q$.

\subsection{Normalized Generalized IT Measures} \label{sec:normmi}

To allow a more interpretable range of variation, a clustering similarity measure should be normalized: it should achieve its maximum at 1 when $U = V$. An upper bound to the generalized mutual information MI$_q$ is used to obtained a normalized measure. $\mbox{MI}_q$ can take different possible upper bounds~\citep{Furiuchi2006}. Here, we choose to derive another possible upper bound using Eq.\ \eqref{eq:vi_beta} when we use the minimum value of $\mbox{VI}_q = 0$: $\max{ \mbox{MI}_q} = \frac{1}{2}( H_q(U) + H_q(V))$. This upper bound is valid for any $q \in \mathbb{R}^+ -\{1\}$ and allows us to link different existing measures as we will show in the next sections of the paper. The Normalized Mutual Information with $q$-entropy (NMI$_q$) is defined as follows:
\begin{equation} \label{eq:nmiq}
\mbox{NMI}_q(U,V) \triangleq \frac{\textup{MI}_q(U,V)}{\max{\textup{MI}_q(U,V) } } = \frac{\textup{MI}_q(U,V)}{\frac{1}{2} \big( H_q(U) + H_q(V) \big) } = \frac{H_q(U) + H_q(V) - H_q(U,V)}{\frac{1}{2} \big( H_q(U) + H_q(V) \big) }
\end{equation}
Even if NMI$_q(U,V)$ achieves its maximum 1 when the partitions $U$ and $V$ are identical, NMI$_q(U,V)$ is not a suitable clustering comparison measure. Indeed, it does not show constant baseline value equal to 0 when partitions are random. We explore this through an experiment. Given a dataset of $N = 100$ objects, we randomly generate uniform partitions $U$ with $r=2,4,6,8,10$ sets and $V$ with $c = 6$ sets \emph{independently} from each others. The average value of $\mbox{NMI}_q$ over $1,000$ simulations for different values of $q$ is shown in Figure \ref{fig:varK}. It is reasonable to expect that when the partitions are independent, the average value of $\mbox{NMI}_q$ is constant irrespectively of the number of sets $r$ of the partition $U$. This is not the case. This behaviour is unintuitive and misleading when comparing partitions.
\begin{figure}[h]
\centering
\includegraphics[scale=.7]{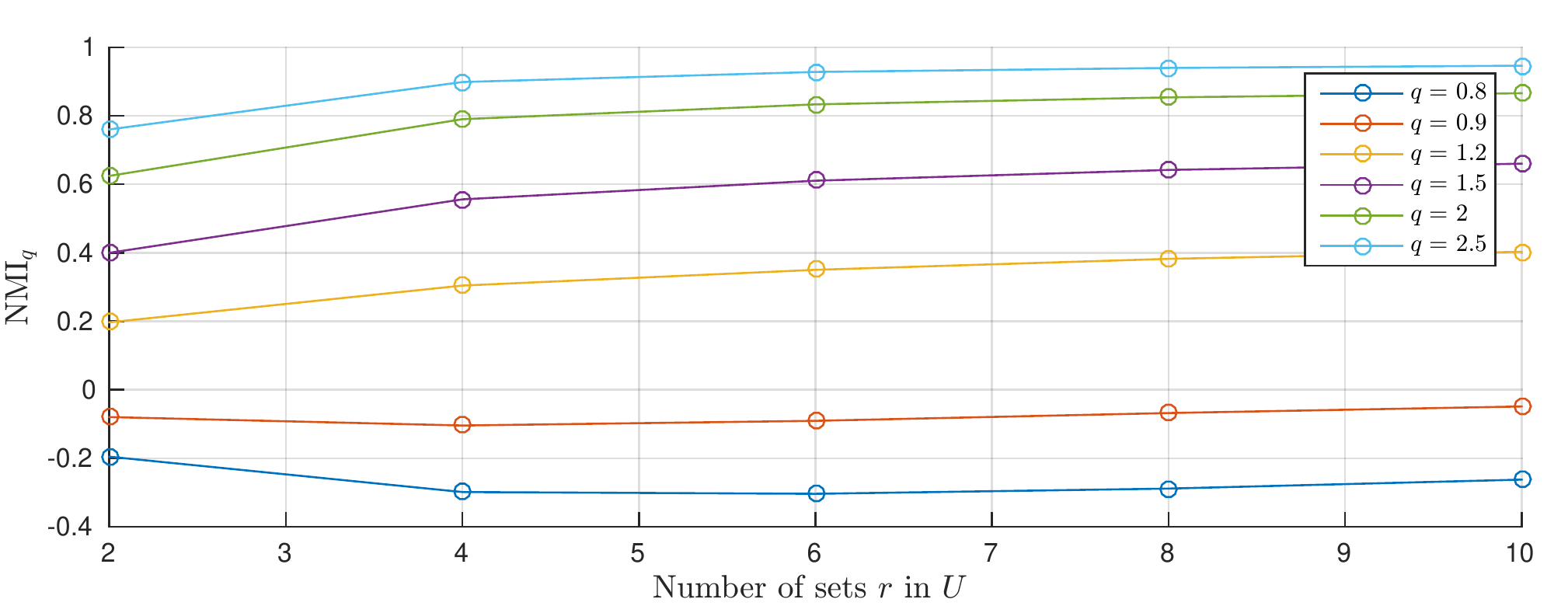}
\caption{The baseline value of $\mbox{NMI}_q$ between independent partitions is not constant.}\label{fig:varK}
\end{figure}
Computing the analytical expected value of generalized IT measures under the null hypothesis of random and independent $U$ and $V$ is important; it can be subtracted from the measure itself to adjust its baseline for chance such that their value is 0 when $U$ and $V$ are random. Given Proposition \ref{prop:vi_ri}, this strategy also allows us to generalize adjusted for chance pair-counting and Shannon IT measures. 

\section{Baseline Adjustment} 

In order to adjust the baseline of a similarity measure $S(U,V)$, we have to compute its expected value under the null hypothesis of independent partitions $U$ and $V$. We adopt the assumption used for RI~\citep{Hubert85} and the Shannon MI~\citep{Nguyen2009}: partitions $U$ and $V$ are generated independently with fixed number of points $N$ and fixed marginals $a_i$ and $b_j$; this is also denoted as the permutation or the hypergeometric model of randomness. We are able to compute the exact expected value for a similarity measure in the family $\mathcal{L}_\phi$:

\begin{defi}
Let $\mathcal{L}_\phi$ be the family of similarity measures $S(U,V) = \alpha + \beta \sum_{ij} \phi_{ij}(n_{ij})$ where $\alpha$ and $\beta$ do not depend on the entries $n_{ij}$ of the contingency table $\mathcal{M}$ and $\phi_{ij}(\cdot)$ are bounded real functions.
\end{defi}

\noindent Intuitively, $\mathcal{L}_\phi$ represents the class of measures that can be written as a linear combination of $\phi_{ij}(n_{ij})$. A measure between partitions uniquely determines $\alpha$, $\beta$, and $\phi_{ij}$. However, not every choice of $\alpha$, $\beta$, and $\phi_{ij}$ yields a meaningful similarity measure. $\mathcal{L}_\phi$ is a superset of the set of $\mathcal{L}$ defined in~\cite{Albatineh2006} as the family of measures $S(U,V) = \alpha + \beta \sum_{ij} n_{ij}^2$, i.e.\ $S \in \mathcal{L}$ are special cases of measures in $\mathcal{L}_\phi$ with $\phi_{ij}(\cdot) = (\cdot)^2$. 
Figure \ref{fig:families} shows a diagram of the similarity measures discussed in Section \ref{sec:genent} and their relationships.
\begin{restatable}{lem}{lememalpha} \label{lemma:emalpha}
If $S(U,V) \in \mathcal{L}_\phi$, when partitions $U$ and $V$ are random:
\begin{equation}
E[S(U,V)] = \alpha  + \beta \sum_{ij} E[\phi_{ij}(n_{ij})] \quad \text{ where } \quad E[\phi_{ij}(n_{ij})] \quad \text{ is } \label{eq:emu}\
\end{equation}
\begin{equation}
\sum_{n_{ij} = \max \{0,a_i+ b_j-N \} }^{\min \{ a_i, b_j \} } \phi_{ij}(n_{ij})
\frac{a_i!b_j!(N-a_i)!(N-b_j)!}{N!n_{ij}!(a_i - n_{ij})!(b_j - n_{ij})!(N - a_i - b_j + n_{ij})!} \label{eq:ephinij}
\end{equation}
\end{restatable}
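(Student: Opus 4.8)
The plan is to decompose the computation into two steps: use linearity of expectation to dispose of the constants $\alpha$ and $\beta$, and then evaluate the expectation of each individual term $\phi_{ij}(n_{ij})$ using the distribution of a single contingency-table entry under the hypergeometric model of randomness.

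First I would note that $\alpha$ and $\beta$ are, by definition of $\mathcal{L}_\phi$, constants independent of the entries $n_{ij}$. Linearity of expectation then gives $E[S(U,V)] = \alpha + \beta \sum_{ij} E[\phi_{ij}(n_{ij})]$ directly, which is Eq.~\eqref{eq:emu}. Since each $\phi_{ij}$ is a bounded real function and the support of $n_{ij}$ is finite, every expectation on the right-hand side is well defined, so no convergence issues arise.

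The heart of the argument is identifying the marginal distribution of a single cell count $n_{ij}$. Under the permutation model the marginals $a_i$ and $b_j$ are fixed and the partitions are drawn independently; equivalently, one fixes $U$ and draws $V$ as a uniformly random partition of the $N$ objects into blocks of sizes $b_1, \dots, b_c$. Restricting attention to the pair of blocks $(u_i, v_j)$, the count $n_{ij} = |u_i \cap v_j|$ is the number of the $a_i$ objects of $u_i$ that happen to fall into the block $v_j$ of size $b_j$. This is exactly sampling $b_j$ objects without replacement from a population of $N$ containing $a_i$ distinguished ones, so $n_{ij}$ is hypergeometric:
\begin{equation*}
P(n_{ij} = k) = \frac{\binom{a_i}{k}\binom{N - a_i}{b_j - k}}{\binom{N}{b_j}}, \qquad \max\{0,\, a_i + b_j - N\} \le k \le \min\{a_i, b_j\}.
\end{equation*}
Expanding the three binomial coefficients into factorials reproduces precisely the probability weight multiplying $\phi_{ij}(n_{ij})$ in Eq.~\eqref{eq:ephinij}.

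Finally I would write $E[\phi_{ij}(n_{ij})] = \sum_k \phi_{ij}(k)\, P(n_{ij} = k)$ summed over the hypergeometric support, which yields Eq.~\eqref{eq:ephinij} verbatim. The main obstacle is the second step: verifying that, after marginalizing the joint law of a random contingency table over all the remaining cells, the law of an individual entry is indeed hypergeometric. This is the same classical combinatorial fact underlying the expected Rand Index in~\citep{Hubert85}, and the care lies in establishing the single-cell marginal rather than invoking the full joint distribution. The rest is routine factorial bookkeeping.
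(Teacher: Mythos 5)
Your proposal is correct and follows essentially the same route as the paper's own proof: linearity of expectation to isolate $\sum_{ij} E[\phi_{ij}(n_{ij})]$, followed by the observation that each cell count $n_{ij}$ is marginally hypergeometric, $n_{ij} \sim \mbox{Hyp}(a_i, b_j, N)$, under the permutation model. Your write-up is in fact slightly more explicit than the paper's, since you justify the single-cell hypergeometric marginal via the sampling-without-replacement argument and check that the factorial weight in Eq.~\eqref{eq:ephinij} matches the binomial form, whereas the paper simply asserts it.
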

\noindent Lemma \ref{lemma:emalpha} extends the results in~\cite{Albatineh2011} showing exact computation of the expected value of measures in the family $\mathcal{L}$. Given that generalized IT measures belong in $\mathcal{L}_\phi$ we can employ this result to adjust them.

\subsection{Baseline Adjustment for Generalized IT measures}

Using Lemma \ref{lemma:emalpha} it is possible to compute the exact expected value of $H_q(U,V)$, $\mbox{VI}_q(U,V)$ and $\mbox{MI}_q(U,V)$:
\begin{restatable}{thm}{thmvibeta} \label{thm:expvi}
When the partitions $U$ and $V$ are random:
\begin{enumerate}[topsep=0pt,itemsep=0ex,partopsep=1ex,parsep=1ex,label=\roman*)]
\item
$E[H_q(U,V)] = \frac{1}{q-1}\Big( 1 - \frac{1}{N^q}\sum_{ij} E[n_{ij}^q]\Big)$ with $E[n_{ij}^q]$ from Eq.\ \eqref{eq:ephinij} with $\phi_{ij}(n_{ij}) = n_{ij}^q$;
\item $E[\mbox{\emph{MI}}_q(U,V)] = H_q(U) + H_q(V) - E[H_q(U,V)]$;
\item $E[\mbox{\emph{VI}}_q(U,V)] = 2E[H_q(U,V)] - H_q(U) - H_q(V)$.
\end{enumerate}
\end{restatable}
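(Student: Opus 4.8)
The plan is to prove the three parts in sequence, with part (i) carrying all the real content via Lemma \ref{lemma:emalpha} and parts (ii)--(iii) following by linearity of expectation once we exploit the fact that the marginals are held fixed under the null model.

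For part (i), the key observation is that the joint $q$-entropy is itself a member of $\mathcal{L}_\phi$. Writing it out from the definition of generalized entropy applied to the joint distribution $n_{ij}/N$,
\[
H_q(U,V) = \frac{1}{q-1}\Big( 1 - \frac{1}{N^q}\sum_{ij} n_{ij}^q \Big),
\]
we see it has the form $\alpha + \beta \sum_{ij}\phi_{ij}(n_{ij})$ with the constant $\alpha = \frac{1}{q-1}$, the constant $\beta = -\frac{1}{(q-1)N^q}$ (neither depending on the entries $n_{ij}$), and $\phi_{ij}(n_{ij}) = n_{ij}^q$. Since $0 \le n_{ij} \le N$ on the support of the contingency table, each $\phi_{ij}$ is bounded, so the hypotheses of Lemma \ref{lemma:emalpha} are satisfied. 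Applying that lemma and factoring $\frac{1}{q-1}$ back out yields the claimed expression, with $E[n_{ij}^q]$ evaluated through Eq.\ \eqref{eq:ephinij} by taking $\phi_{ij}(n_{ij}) = n_{ij}^q$.

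For parts (ii) and (iii), I would begin from the identities in Eq.\ \eqref{eq:mi_beta} and Eq.\ \eqref{eq:vi_beta}, namely $\mbox{MI}_q(U,V) = H_q(U) + H_q(V) - H_q(U,V)$ and $\mbox{VI}_q(U,V) = 2H_q(U,V) - H_q(U) - H_q(V)$, and take expectations using linearity. The step that collapses these to the stated forms is the observation that under the permutation (hypergeometric) model the row and column marginals $a_i$ and $b_j$ are fixed. Because $H_q(U)$ depends only on $\{a_i\}$ and $H_q(V)$ only on $\{b_j\}$, both are deterministic constants under the null, so $E[H_q(U)] = H_q(U)$ and $E[H_q(V)] = H_q(V)$. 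Substituting $E[H_q(U,V)]$ from part (i) then gives the two identities directly.

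There is no serious obstacle here: the result is essentially an application of Lemma \ref{lemma:emalpha} combined with linearity of expectation. The only point requiring care is the clean separation of what is random from what is fixed---only the interior counts $n_{ij}$ fluctuate, while the marginals, and hence $H_q(U)$ and $H_q(V)$, are constants---which is precisely what allows the marginal entropies to be pulled outside the expectation unchanged.
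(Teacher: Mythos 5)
Your proposal is correct and follows exactly the route the paper intends: the paper's own proof simply states that the results follow from Lemma \ref{lemma:emalpha} and the hypothesis of fixed marginals, and you have filled in precisely those details (identifying $\alpha$, $\beta$, $\phi_{ij}$ for $H_q(U,V)$, and noting that $H_q(U)$ and $H_q(V)$ are deterministic under the permutation model). No gaps.
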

\noindent It is worth noting that this approach is valid for any $q \in \mathbb{R}^+ - \{1\}$. We can use these expected values to adjust for baseline generalized IT measures. We use the method proposed in~\cite{Hubert85} to adjust similarity measures, such as $\mbox{MI}_q$, and distance measures, such as $\mbox{VI}_q$: \begin{equation} \label{eq:adj}
\mbox{AMI}_q  \triangleq \frac{\mbox{MI}_q - E[\mbox{MI}_q]}{\max{\mbox{MI}_q} - E[\mbox{MI}_q]} \quad
\mbox{AVI}_q  \triangleq \frac{E[\mbox{VI}_q] - \mbox{VI}_q}{E[\mbox{VI}_q] - \min{\mbox{VI}_q}} 
\end{equation}
$\mbox{VI}_q$ is a distance measure, thus $\min{\mbox{VI}_q} = 0$. For MI$_q$ we use the upper bound $\max{ \mbox{MI}_q} = \frac{1}{2}\big( H_q(U) + H_q(V)\big)$ as for NMI$_q$ in Eq.\ \eqref{eq:nmiq}. An exhaustive list of adjusted versions of Shannon MI can be found in~\cite{Nguyen2010}, when the upper bound $\frac{1}{2}( H_q(U) + H_q(V))$ is used the authors named the adjusted MI as $\mbox{AMI}_\textup{sum}$.

It is important to note that this type of adjustment turns distance measures into similarity measures, i.e.,\ $\mbox{AVI}_q$ is a similarity measure. It is also possible to maintain both the distance properties and the baseline adjustment using $\mbox{NVI}_q \triangleq \mbox{VI}_q/ E[\mbox{VI}_q]$ which can be seen as a normalization of $\mbox{VI}_q$ with the stochastic upper bound $E[\mbox{VI}_q]$~\citep{Nguyen2009}. It is also easy to see that $\mbox{AVI}_q = 1 - \mbox{NVI}_q$. The adjustments in Eq.\ \eqref{eq:adj} also enable the measures to be normalized. $\mbox{AMI}_q$ and $\mbox{AVI}_q$ achieve their maximum at 1 when $U = V$ and their minimum is 0 when $U$ and $V$ are random partitions.

According to the chosen upper bound for MI$_q$, we obtain the nice analytical form shown in Theorem \ref{thm:adjan}. Our adjusted measures quantify the discrepancy between the values of the actual contingency
table and their expected value in relation to the maximum discrepancy possible, i.e.\ the denominator
in Eq.\ \eqref{eq:adjan_eq}. It is also easy to see that all measures in $\mathcal{L}_\phi$ resemble this form when adjusted.
\begin{restatable}{thm}{adjan} \label{thm:adjan}
Using $E[n_{ij}^q]$ in Eq.\ \eqref{eq:ephinij} with $\phi_{ij}(n_{ij}) = n_{ij}^q$, the adjustments for chance for $\mbox{\emph{MI}}_q(U,V)$ and $\mbox{\emph{VI}}_q(U,V)$ are:
\begin{equation} \label{eq:adjan_eq}
\mbox{\emph{AMI}}_q(U,V) = \mbox{\emph{AVI}}_q(U,V) =\frac{\sum_{ij} n_{ij}^q - \sum_{ij} E[n_{ij}^q]}{ \frac{1}{2}\Big( \sum_i a_i^q + \sum_j b_j^q \Big) - \sum_{ij} E[n_{ij}^q]}
\end{equation}
\end{restatable}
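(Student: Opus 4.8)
The plan is to substitute the explicit algebraic forms of the three generalized entropies into the adjustment formulas of Eq.~\eqref{eq:adj} and simplify, observing that a common prefactor cancels to leave the stated expression. No deep idea is needed; the content is entirely in the bookkeeping and in invoking Theorem~\ref{thm:expvi} for the expectation.

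First I would expand each $q$-entropy from its definition. Applying $H_q(\cdot)=\frac{1}{q-1}\bigl(1-\frac{1}{N^q}\sum(\cdot)^q\bigr)$ to the two marginal distributions and to the joint distribution gives $H_q(U)=\frac{1}{q-1}\bigl(1-\frac{1}{N^q}\sum_i a_i^q\bigr)$, $H_q(V)=\frac{1}{q-1}\bigl(1-\frac{1}{N^q}\sum_j b_j^q\bigr)$, and $H_q(U,V)=\frac{1}{q-1}\bigl(1-\frac{1}{N^q}\sum_{ij} n_{ij}^q\bigr)$. Substituting into Eq.~\eqref{eq:mi_beta} and Eq.~\eqref{eq:vi_beta} yields the closed forms $\mbox{MI}_q=\frac{1}{(q-1)N^q}\bigl(N^q-\sum_i a_i^q-\sum_j b_j^q+\sum_{ij} n_{ij}^q\bigr)$ and $\mbox{VI}_q=\frac{1}{(q-1)N^q}\bigl(\sum_i a_i^q+\sum_j b_j^q-2\sum_{ij} n_{ij}^q\bigr)$. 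Because the hypergeometric null fixes the marginals $a_i,b_j$, the terms $H_q(U)$ and $H_q(V)$ are deterministic, so only $\sum_{ij} n_{ij}^q$ is random; by Theorem~\ref{thm:expvi} the corresponding expectations are obtained by replacing this sum with $\sum_{ij} E[n_{ij}^q]$.

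Next I would evaluate the two ratios. For AMI$_q$ the numerator equals $E[H_q(U,V)]-H_q(U,V)=\frac{1}{(q-1)N^q}\bigl(\sum_{ij} n_{ij}^q-\sum_{ij} E[n_{ij}^q]\bigr)$, since the marginal entropies cancel in $\mbox{MI}_q-E[\mbox{MI}_q]$, while the denominator is $\max\mbox{MI}_q-E[\mbox{MI}_q]=E[H_q(U,V)]-\tfrac12(H_q(U)+H_q(V))=\frac{1}{(q-1)N^q}\bigl(\tfrac12(\sum_i a_i^q+\sum_j b_j^q)-\sum_{ij} E[n_{ij}^q]\bigr)$. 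The common factor $\frac{1}{(q-1)N^q}$ cancels between numerator and denominator, producing exactly Eq.~\eqref{eq:adjan_eq}. For AVI$_q$, using $\min\mbox{VI}_q=0$ we have $\mbox{AVI}_q=(E[\mbox{VI}_q]-\mbox{VI}_q)/E[\mbox{VI}_q]$; here both quantities carry the factor $\frac{2}{(q-1)N^q}$, which again cancels and leaves the identical expression, thereby establishing $\mbox{AMI}_q=\mbox{AVI}_q$.

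The argument is pure algebra rather than a genuine obstacle; the one point to handle with care is the sign of $q-1$, which is negative for $q<1$. Since $\frac{1}{(q-1)N^q}$ enters both numerator and denominator as a common factor, it cancels regardless of sign, so the final form holds for every $q\in\mathbb{R}^+-\{1\}$ with no case distinction. The only input beyond this manipulation is the expectation of $\sum_{ij} n_{ij}^q$ supplied by Theorem~\ref{thm:expvi} (equivalently Lemma~\ref{lemma:emalpha} with $\phi_{ij}(\cdot)=(\cdot)^q$), which I would simply invoke.
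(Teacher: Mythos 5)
Your proposal is correct and follows essentially the same route as the paper's proof: write $\mbox{MI}_q$ and $\mbox{VI}_q$ explicitly in terms of $\sum_i a_i^q$, $\sum_j b_j^q$, $\sum_{ij} n_{ij}^q$, use the fixed marginals to reduce the expectations to $\sum_{ij}E[n_{ij}^q]$, and cancel the common prefactor $\frac{1}{(q-1)N^q}$ in each ratio. The paper only computes $\mbox{AVI}_q$ after noting the equivalence with $\mbox{AMI}_q$, whereas you verify both ratios directly, but this is the same argument.
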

\noindent From now on we only discuss AMI$_q$, given that it is identical to AVI$_q$.
There are notable special cases for our proposed adjusted generalized IT measures. In particular, the Adjusted Rand Index (ARI)~\citep{Hubert85} is just equal to AMI$_2$. ARI is a classic measure, heavily used for validation in social sciences and the most popular clustering validity index.
\begin{restatable}{cor}{adjspec}\label{cor:adjspec}
It holds true that:\\
 i) $\lim_{q \rightarrow 1} \mbox{\emph{AMI}}_q = \lim_{q \rightarrow 1} \mbox{\emph{AVI}}_q = \mbox{\emph{AMI}} = \mbox{\emph{AVI}}$ with Shannon entropy;\\
 ii) $\mbox{\emph{AMI}}_2 = \mbox{\emph{AVI}}_2 = \mbox{\emph{ARI}}$.
\end{restatable}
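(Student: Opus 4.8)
Since Theorem~\ref{thm:adjan} already establishes $\mbox{AMI}_q = \mbox{AVI}_q$ for every admissible $q$, the plan is to analyze only $\mbox{AMI}_q$ in each part and let that equality carry the conclusion over to $\mbox{AVI}_q$. The two parts are structurally different: part~ii) follows by directly identifying the closed form~\eqref{eq:adjan_eq} at $q=2$ with the Hubert--Arabie expression for $\mbox{ARI}$, whereas part~i) cannot be obtained by substituting $q=1$ into~\eqref{eq:adjan_eq}, because both the numerator and the denominator there vanish at $q=1$ (indeed $\sum_{ij} n_{ij} = N = \sum_{ij} E[n_{ij}]$ and $\tfrac12(\sum_i a_i + \sum_j b_j) = N$), so the closed form is a genuine $0/0$ indeterminate at $q=1$.

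For part~ii) I would start from~\eqref{eq:adjan_eq} with $q=2$, where the numerator is $\sum_{ij} n_{ij}^2 - \sum_{ij} E[n_{ij}^2]$ and the denominator is $\tfrac12(\sum_i a_i^2 + \sum_j b_j^2) - \sum_{ij} E[n_{ij}^2]$. The Hubert--Arabie $\mbox{ARI}$ is the baseline-adjusted form of $\sum_{ij}\binom{n_{ij}}{2}$ with maximum $\tfrac12(\sum_i \binom{a_i}{2} + \sum_j \binom{b_j}{2})$. Using $\binom{n}{2} = \tfrac12(n^2 - n)$ and linearity of expectation, together with the permutation-model mean $E[n_{ij}] = a_i b_j/N$ (so that $\sum_{ij} E[n_{ij}] = N$) and the marginal constraints $\sum_i a_i = \sum_j b_j = N$, the additive $n_{ij}$-terms and the constants cancel in both the numerator and the denominator. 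Concretely, the $\mbox{ARI}$ numerator equals $\tfrac12(\sum_{ij} n_{ij}^2 - \sum_{ij} E[n_{ij}^2])$ and its denominator equals $\tfrac12\big(\tfrac12(\sum_i a_i^2 + \sum_j b_j^2) - \sum_{ij} E[n_{ij}^2]\big)$; the common factor $\tfrac12$ cancels and the ratio coincides exactly with $\mbox{AMI}_2$. This step is routine and should go through with no obstruction.

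For part~i) I would instead return to the defining ratios~\eqref{eq:adj}, $\mbox{AMI}_q = (\mbox{MI}_q - E[\mbox{MI}_q])/(\max \mbox{MI}_q - E[\mbox{MI}_q])$, where every ingredient has a well-behaved $q\to1$ limit. The basic limit is $\lim_{q\to1}\tfrac{1}{q-1}\big(1 - \sum_k p_k^q\big) = -\sum_k p_k \ln p_k$, obtained by L'H\^opital's rule, which shows $H_q \to H$ for each of $H_q(U)$, $H_q(V)$, and $H_q(U,V)$; hence $\mbox{MI}_q \to \mbox{MI}$ and $\max\mbox{MI}_q = \tfrac12(H_q(U)+H_q(V)) \to \tfrac12(H(U)+H(V)) = \max\mbox{MI}$. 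For the expectation I would use that, with $N$ and the marginals fixed, there are only finitely many contingency tables, so $E[H_q(U,V)]$ is a finite convex combination of the continuous-in-$q$ values $H_q(U,V)$; the limit therefore passes inside the expectation, giving $E[H_q(U,V)] \to E[H(U,V)]$ and thus $E[\mbox{MI}_q] \to E[\mbox{MI}]$. Since $\max\mbox{MI} - E[\mbox{MI}] > 0$, the quotient converges and $\lim_{q\to1}\mbox{AMI}_q = (\mbox{MI} - E[\mbox{MI}])/(\max\mbox{MI} - E[\mbox{MI}])$, which is precisely the Shannon $\mbox{AMI}$ (the $\mbox{AMI}_{\textup{sum}}$ of~\citep{Nguyen2010}). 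The identity $\mbox{AMI} = \mbox{AVI}$ in the Shannon case then follows by taking $q\to1$ in $\mbox{AMI}_q = \mbox{AVI}_q$.

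The main obstacle is part~i): the closed form~\eqref{eq:adjan_eq} is degenerate at $q=1$, so the real work is to avoid it by reverting to~\eqref{eq:adj}, to establish the $q$-entropy-to-Shannon-entropy limit rigorously, and above all to justify interchanging the limit $q\to1$ with the (finite) expectation and to check that the limiting denominator $\max\mbox{MI} - E[\mbox{MI}]$ does not vanish. Part~ii), by contrast, is pure bookkeeping once the $\binom{n_{ij}}{2}$-versus-$n_{ij}^2$ shift is tracked through both the numerator and the denominator.
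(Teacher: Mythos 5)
Your proposal is correct, but for part~ii) it takes a genuinely different route from the paper. The paper's proof of ii) is a one-line application of Proposition~\ref{prop:vi_ri}: since $\mbox{VI}_2 = \frac{N-1}{N}(1-\mbox{RI})$ is an affine function of RI, the same affine relation holds for $E[\mbox{VI}_2]$ and $\min \mbox{VI}_2 = \frac{N-1}{N}(1-\max\mbox{RI})$, so the factor $\frac{N-1}{N}$ cancels in the ratio $\frac{E[\mbox{VI}_2]-\mbox{VI}_2}{E[\mbox{VI}_2]-\min\mbox{VI}_2}$ and ARI pops out with no contingency-table algebra at all. You instead verify the identity at the level of the closed form~\eqref{eq:adjan_eq}, tracking the $\binom{n}{2}$-versus-$n^2$ shift through numerator and denominator; your cancellations ($\sum_{ij}n_{ij}=\sum_{ij}E[n_{ij}]=N$, the extra common factor $\tfrac12$) all check out, so the computation is valid --- it is just longer and does not exploit the linear link that the paper set up precisely for this purpose. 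For part~i) your approach matches the paper's (which says only that it ``follows from the limit of the $q$-entropy as $q\to 1$''), and you add the justifications the paper leaves implicit: that one must work with the defining ratio~\eqref{eq:adj} rather than the $0/0$-degenerate closed form, that the limit passes through the expectation because the permutation model has finitely many tables, and that the limiting denominator must be nonzero. That last point deserves one more sentence than you give it: $\max\mbox{MI}-E[\mbox{MI}]>0$ fails in degenerate cases (e.g.\ $U$ or $V$ a single cluster), an edge case the paper also ignores; it would be worth stating the nondegeneracy assumption explicitly rather than only flagging it as something ``to check.''
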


\noindent Therefore, using the permutation model we can perform baseline adjustment to generalized IT measures. Our generalized adjusted IT measures are a further generalization of particular well known adjusted measures such as AMI and ARI. It is worth noting, that ARI is equivalent to other well known measures for comparing partitions~\citep{Albatineh2006}. Furthermore, there is also a strong connection between ARI and Cohen's $\kappa$ statistics used to quantify inter-rater agreement~\citep{Warrens2008}.

\paragraph{Computational complexity:} The computational complexity of $\mbox{AMI}_q$ in Eq.\ \eqref{eq:adjan_eq} is dominated by the computation of the sum of the expected value of each cell.
\begin{restatable}{prop}{complexp}
The computational complexity of $\mbox{\emph{AMI}}_q$ is $O(N \cdot \max{ \{ r,c\}})$. 
\end{restatable}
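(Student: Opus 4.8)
The plan is to read the computational cost directly off the closed form in Eq.~\eqref{eq:adjan_eq} and show that it is dominated by the double sum $\sum_{ij} E[n_{ij}^q]$ in the numerator and denominator. The remaining pieces are cheap: $\sum_{ij} n_{ij}^q$ has at most $\min\{N, rc\} \le N$ nonzero contributions (the positive entries of $\mathcal{M}$ are at least $1$ and sum to $N$), while $\sum_i a_i^q$ and $\sum_j b_j^q$ cost $O(r)$ and $O(c)$; since a partition of $N$ objects has at most $N$ blocks we have $r,c \le N$, so all of these are $O(N)$. Hence the bound will follow once I account for $\sum_{ij} E[n_{ij}^q]$.

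First I would analyze the cost of a single cell. By Eq.~\eqref{eq:ephinij}, $E[n_{ij}^q]$ is a sum over $n_{ij}$ ranging in $[\max\{0, a_i + b_j - N\},\, \min\{a_i, b_j\}]$, hence has at most $\min\{a_i, b_j\} + 1$ terms. The key point is that each term need not be recomputed from scratch: the ratio of the hypergeometric weight at $n_{ij}+1$ to the weight at $n_{ij}$ is a simple rational function of $n_{ij}, a_i, b_j, N$, so after evaluating the first term (using precomputed log-factorials up to $N$, a one-off $O(N)$ cost shared across all cells) every subsequent weight, and the accompanying factor $n_{ij}^q$, is obtained in $O(1)$. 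Thus $E[n_{ij}^q]$ costs $O(\min\{a_i, b_j\} + 1)$.

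Next I would sum this over all cells and bound the result combinatorially. The total is $\sum_{ij}(\min\{a_i, b_j\} + 1) = \sum_{ij}\min\{a_i, b_j\} + rc$. For the first term, fixing the row index $i$ and using $\min\{a_i, b_j\} \le b_j$ gives $\sum_j \min\{a_i, b_j\} \le \sum_j b_j = N$, so $\sum_{ij}\min\{a_i, b_j\} \le rN \le N\max\{r,c\}$ (symmetrically one also gets $cN$, so in fact $N\min\{r,c\}$ suffices). For the second term, $rc \le \max\{r,c\}\cdot\min\{r,c\} \le \max\{r,c\}\cdot N$. Combining, $\sum_{ij} E[n_{ij}^q]$ is computed in $O(N\max\{r,c\})$, which dominates the $O(N)$ cost of the other sums and establishes the proposition.

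The only delicate step is the $O(1)$-per-term claim: a naive evaluation of the factorial ratio in Eq.~\eqref{eq:ephinij} would cost $O(N)$ per cell value and inflate the bound to $O(N^2\max\{r,c\})$. The main obstacle is therefore to verify that consecutive hypergeometric weights obey an exact multiplicative recurrence, so that the inner loop advances in constant time; everything else is elementary counting. I would also note that the same accounting yields the tighter $O(N\min\{r,c\})$, but the stated $O(N\max\{r,c\})$ is immediate from it.
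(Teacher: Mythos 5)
Your argument is correct and follows essentially the same route as the paper's proof: both rest on the multiplicative recurrence relating consecutive hypergeometric weights, which lets the inner loop over each cell's support advance in $O(1)$ per term, and then bound the sum of per-cell costs over the contingency table. The only divergence is bookkeeping --- the paper evaluates the first weight of each cell directly in $O(\max\{a_i,b_j\})$ and bounds the total by $\sum_{ij}O(\max\{a_i,b_j\}) = O(N\cdot\max\{r,c\})$, whereas your one-off log-factorial table makes each first weight $O(1)$ and yields the marginally tighter $O(N\cdot\min\{r,c\} + rc)$, which still implies the stated bound.
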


\noindent If all the possible contingency tables $\mathcal{M}$ obtained by permutations were generated, the computational complexity of the exact expected value would be $O(N!)$. However, this can be dramatically reduced using properties of the expected value.

\subsection{Experiments on Measure Baseline}

Here we show that our adjusted generalized IT measures have a baseline value of 0 when comparing random partitions $U$ and $V$. In Figure \ref{fig:varKadj} we show the behaviour of $\mbox{AMI}_q$, ARI, and AMI on the same experiment proposed in Section \ref{sec:normmi}. They are all close to 0 with negligible variation when the partitions are random and independent. Moreover, it is interesting to see the equivalence of $\mbox{AMI}_2 $ and ARI. On the other hand, the equivalence of $\mbox{AMI}_q$ and AMI with Shannon's entropy is obtained only at the limit $q \rightarrow 1$.
\begin{figure}[h]
\centering
\begin{minipage}{0.49\textwidth}
\includegraphics[scale=.65]{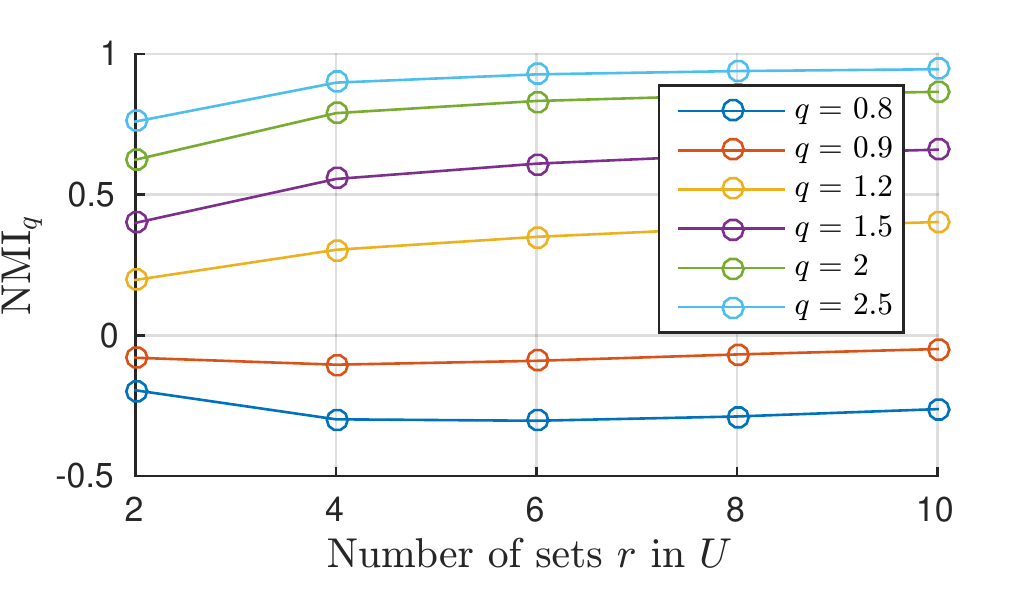}
\end{minipage}
\begin{minipage}{0.43\textwidth}
\includegraphics[scale=.65]{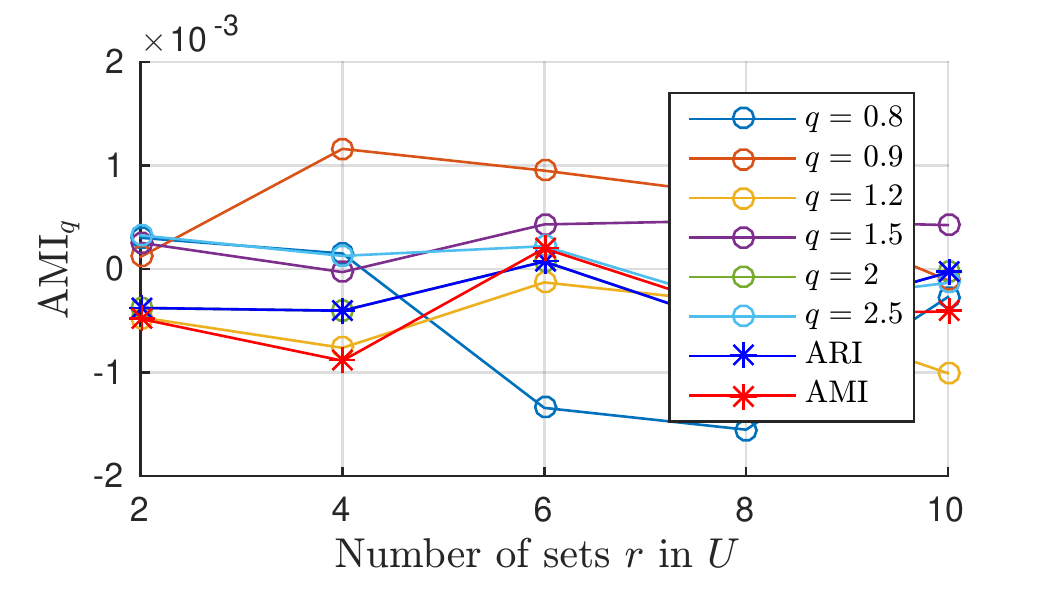}
\end{minipage}
\caption{When varying the number of sets for the random partition $U$, the value of $\mbox{AMI}_q(U,V)$ is always very close to 0 with negligible variation for any $q$.}\label{fig:varKadj}
\end{figure}

We also point out that $\mbox{NMI}_q$ does not show constant baseline when the relative size of the sets in $U$ varies when $U$ and $V$ are random. In Figure \ref{fig:varPercadj}, we generate random partitions $V$ with $c = 6$ sets on $N = 100$ points, and random binary partitions $U$ \emph{independently}. $\mbox{NMI}_q(U,V)$ shows different behavior at the variation of the relative size of the biggest set in $U$. This is unintuitive given that the partitions $U$ and $V$ are random and independent. We obtain the desired property of a baseline value of 0 with $\mbox{AMI}_q$.
\begin{figure}[h]
\centering
\begin{minipage}{0.49\textwidth}
\includegraphics[scale=.65]{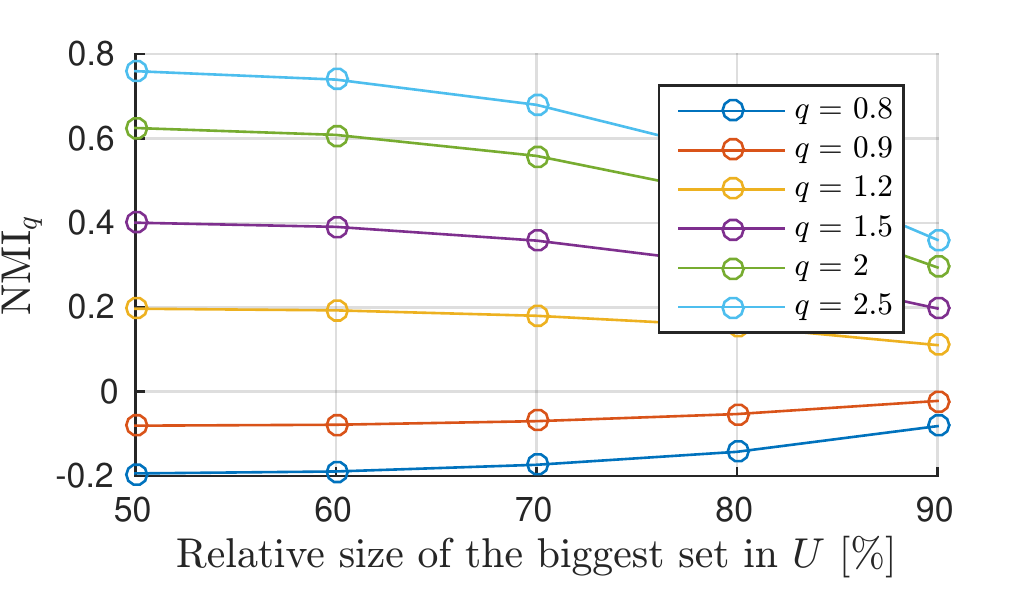}
\end{minipage}
\begin{minipage}{0.43\textwidth}
\includegraphics[scale=.65]{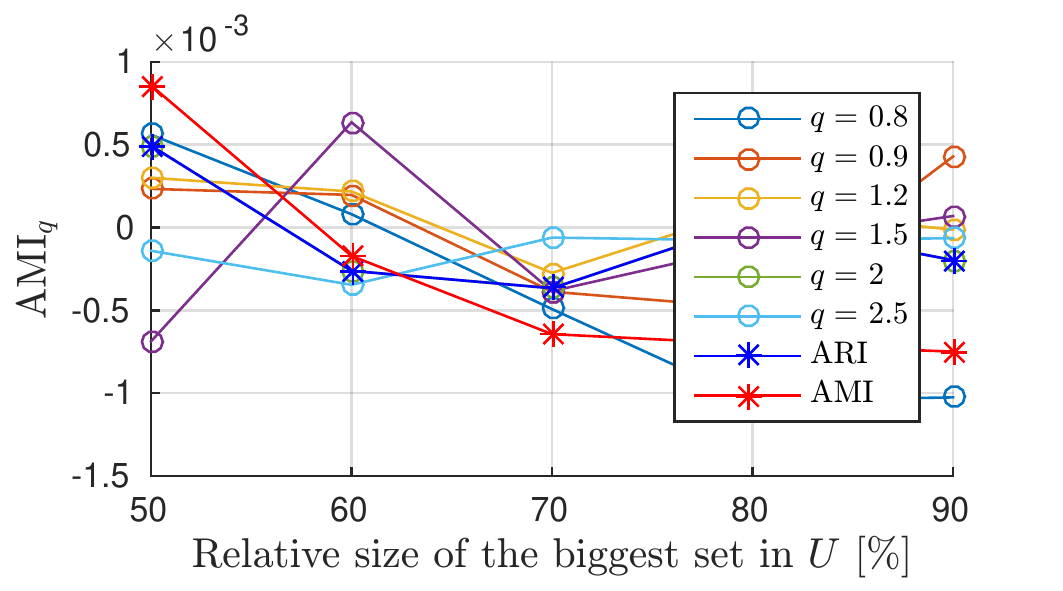}
\end{minipage}
\caption{When varying the relative size of one cluster for the random partition $U$, the value of $\mbox{AMI}_q(U,V)$ is always very close to 0 with negligible variation for any $q$.}\label{fig:varPercadj}
\end{figure}

\subsection{Large Number of Objects}

In this section, we introduce a very general family of measures which includes $\mathcal{L}_\phi$. For measures belonging to this family, it is possible to find an approximation of their expected value when the number of objects $N$ is large. This allows us to identify approximations for the expected value of measures in $\mathcal{L}_\phi$ as well as for measures not in $\mathcal{L}_\phi$, such as the Jaccard coefficient as shown in Figure \ref{fig:families}.

Let $\mathcal{N}_\phi$ be the family of measures which are \emph{non}-linear combinations of $\phi_{ij}(n_{ij})$:

\begin{defi}
Let $\mathcal{N}_\phi$ be the family of similarity measures $S(U,V) = \phi(\frac{n_{11}}{N},\dots,\frac{n_{ij}}{N},\dots,\frac{n_{rc}}{N})$ where $\phi$ is a bounded real function as $N$ reaches infinity.
\end{defi}
Note that $\mathcal{N}_\phi$ is a generalization of $\mathcal{L}_\phi$.
At the limit of large number of objects $N$, it is possible to compute the expected value of measures in $\mathcal{N}_\phi$ under random partitions $U$ and $V$ using only the marginals of the contingency table $\mathcal{M}$:
\begin{restatable}{lem}{largephi} \label{lemma:largephi}
If $S(U,V) \in \mathcal{N}_\phi$, then $\lim_{N\rightarrow +\infty} E[S(U,V)]=  \phi\Big( \frac{a_1}{N}\frac{b_1}{N},\dots,\frac{a_i}{N}\frac{b_j}{N},\dots,\frac{a_r}{N}\frac{b_c}{N}\Big)$.
\end{restatable}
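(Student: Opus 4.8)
The plan is to exploit the concentration of the normalized cell counts $n_{ij}/N$ around their means under the permutation model, and then transport this concentration through the bounded function $\phi$ to its expectation. First I would recall that under the hypergeometric model of randomness the probability mass function of a single cell $n_{ij}$ is exactly the hypergeometric weight appearing in Eq.\ \eqref{eq:ephinij}; that is, $n_{ij}$ is marginally hypergeometric with parameters $(N, a_i, b_j)$. Consequently $E[n_{ij}] = a_i b_j / N$, so that $E[n_{ij}/N] = \frac{a_i}{N}\frac{b_j}{N}$, which is precisely the argument appearing on the right-hand side of the claimed identity. Moreover the hypergeometric variance gives $\mathrm{Var}(n_{ij}) = \frac{a_i b_j (N-a_i)(N-b_j)}{N^2(N-1)}$, hence $\mathrm{Var}(n_{ij}/N) = O(1/N)$ once the cluster proportions $a_i/N$ and $b_j/N$ are held fixed while $N$ grows.

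The second step is to convert this variance bound into concentration. By Chebyshev's inequality, $n_{ij}/N$ converges in probability to $\frac{a_i}{N}\frac{b_j}{N}$ for each pair $(i,j)$. Since $r$ and $c$ are fixed, coordinatewise convergence in probability to constants upgrades to convergence in probability of the whole normalized vector $X_N \triangleq (n_{11}/N, \dots, n_{rc}/N)$ to the deterministic vector $p \triangleq \big( \frac{a_1}{N}\frac{b_1}{N}, \dots, \frac{a_r}{N}\frac{b_c}{N} \big)$.

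Finally I would push this through $\phi$. Assuming $\phi$ is continuous at $p$, the continuous mapping theorem yields $\phi(X_N) \to \phi(p)$ in probability. Because $\phi$ is bounded (uniformly in $N$, as stipulated in the definition of $\mathcal{N}_\phi$), the bounded convergence theorem for convergence in probability lets me interchange the limit and the expectation, giving $\lim_{N\to +\infty} E[S(U,V)] = \lim_{N\to +\infty} E[\phi(X_N)] = \phi(p)$, which is the assertion.

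\textbf{Expected main obstacle.} The crux is the regularity of $\phi$. Boundedness alone only controls the exchange of limit and expectation; it does \emph{not} by itself justify passing the limit inside $\phi$. For that I need continuity of $\phi$ at the limiting point $p$, which is implicit in the intended application (for instance the Jaccard coefficient is a smooth ratio away from degeneracies). A secondary point I would make explicit is the scaling regime: the limit must be taken along a sequence in which the marginal proportions stay fixed as $N \to +\infty$, since this is exactly what forces the variances to vanish and makes the product-of-marginals point $p$ a genuine constant target.
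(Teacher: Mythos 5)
Your argument is correct in substance but takes a genuinely different route from the paper. The paper Taylor-expands $\phi$ around the mean vector $\boldsymbol\mu = \big(\frac{a_1}{N}\frac{b_1}{N},\dots,\frac{a_r}{N}\frac{b_c}{N}\big)$, takes expectations term by term so that the first-order term vanishes, bounds the second-order term via $|\mathrm{Cov}(X_t,X_s)| \leq \sqrt{\mathrm{Var}(X_t)\mathrm{Var}(X_s)}$, and then uses exactly the same variance computation you give ($\mathrm{Var}(n_{ij}/N) \to 0$ with marginal proportions held fixed) to kill the remainder, asserting that higher-order terms are dominated. You instead go through Chebyshev concentration of the normalized vector, the continuous mapping theorem, and bounded convergence. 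The trade-off is instructive: the paper's route implicitly demands that $\phi$ be at least twice differentiable and that the higher-order Taylor remainders be controlled (a step the paper waves through rather than proves), whereas your route needs only continuity of $\phi$ at the limit point plus the boundedness already stipulated in the definition of $\mathcal{N}_\phi$ -- strictly weaker hypotheses, and a cleaner justification of the limit--expectation interchange. Your closing caveat is well placed: neither continuity nor differentiability is actually part of the stated definition of $\mathcal{N}_\phi$, so both proofs are silently assuming regularity beyond boundedness; yours at least makes the minimal needed assumption explicit. You also correctly isolate the scaling regime (fixed marginal proportions as $N \to +\infty$) that both arguments rely on to make $\boldsymbol\mu$ a constant target and the variances vanish.
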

\noindent In~\cite{Agresti1984} the expected value of the RI was computed using an approximated value based on the multinomial distribution. It turns out this approximated value is equal to what we obtain for RI using Lemma \ref{lemma:largephi}. The authors of~\citep{Albatineh2006} noticed that the difference between the approximation and the expected value obtained with the hypergeometric model is small on empirical experiments when $N$ is large. We point out that this is a natural consequence of Lemma \ref{lemma:largephi} given that $\mbox{RI} \in \mathcal{L}_\phi	\subseteq \mathcal{N}_\phi$.  Moreover, the multinomial distribution was also used to compute the expected value of the Jaccard coefficient (J) in~\cite{Albatineh2011}, obtaining good results on empirical experiments with many objects. Again, this is a natural consequence of Lemma \ref{lemma:largephi} given that $\mbox{J} \in \mathcal{N}_\phi$ but $\mbox{J} \notin \mathcal{L}_\phi$. 

Generalized IT measures belong in $\mathcal{L}_\phi	\subseteq \mathcal{N}_\phi$. Therefore we can employ Lemma \ref{lemma:largephi}. When the number of objects is large, the expected value under random partitions $U$ and $V$ of $H_q(U,V)$, MI$_q(U,V)$, and VI$_q(U,V)$ in Theorem \ref{thm:expvi} depends only on the entropy of the partitions $U$ and $V$:
\begin{restatable}{thm}{largeNmean} \label{thm:largeNmean} 
It holds true that:
\begin{enumerate}[topsep=0pt,itemsep=0ex,partopsep=1ex,parsep=1ex,label=\roman*)]
\item $\lim_{N \rightarrow +\infty} E[H_q(U,V)] = H_q(U) + H_q(V) - (q -1) H_q(U)H_q(V)$;
\item $\lim_{N \rightarrow +\infty} E[\mbox{\emph{MI}}_q(U,V)] = (q -1) H_q(U)H_q(V)$;
\item $\lim_{N \rightarrow +\infty} E[\mbox{\emph{VI}}_q(U,V)] = H_q(U) + H_q(V) - 2(q -1) H_q(U)H_q(V)$.
\end{enumerate}
\end{restatable}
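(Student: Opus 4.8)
The plan is to treat part (i) as the substantive computation and then obtain parts (ii) and (iii) as immediate algebraic consequences of the exact identities already established. First I would observe that the joint $q$-entropy $H_q(U,V) = \frac{1}{q-1}\big(1 - \sum_{ij}(n_{ij}/N)^q\big)$ is itself a member of $\mathcal{N}_\phi$: it is the function $\phi(x_{11},\dots,x_{rc}) = \frac{1}{q-1}(1 - \sum_{ij} x_{ij}^q)$ evaluated at $x_{ij} = n_{ij}/N$, and since each argument lies in $[0,1]$ and the number of cells $r\cdot c$ is fixed under the permutation model, $\phi$ is bounded. Lemma \ref{lemma:largephi} then applies directly and gives
\begin{equation}
\lim_{N \to +\infty} E[H_q(U,V)] = \frac{1}{q-1}\Big(1 - \sum_{ij}\Big(\frac{a_i}{N}\frac{b_j}{N}\Big)^q\Big).
\end{equation}

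The key algebraic step is to factor the double sum. Since each cell contributes $(a_i b_j/N^2)^q$, the sum separates as $\sum_{ij}(a_i b_j/N^2)^q = \big(\sum_i (a_i/N)^q\big)\big(\sum_j (b_j/N)^q\big)$. I would then set $A = \sum_i (a_i/N)^q$ and $B = \sum_j (b_j/N)^q$ and use the defining identities $A = 1 - (q-1)H_q(U)$ and $B = 1 - (q-1)H_q(V)$, which follow at once from the definition of the $q$-entropy. Substituting these into $\frac{1}{q-1}(1 - AB)$ and expanding the product, the linear terms recombine into $H_q(U) + H_q(V)$ while the cross term yields $-(q-1)H_q(U)H_q(V)$, establishing part (i).

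For parts (ii) and (iii) I would invoke the exact identities of Theorem \ref{thm:expvi}, namely $E[\mbox{MI}_q] = H_q(U) + H_q(V) - E[H_q(U,V)]$ and $E[\mbox{VI}_q] = 2E[H_q(U,V)] - H_q(U) - H_q(V)$. Because the marginals $a_i$ and $b_j$ are held fixed in the permutation model, the quantities $H_q(U)$ and $H_q(V)$ are deterministic constants, so the limit passes straight through and one simply substitutes the result of part (i). The $\mbox{MI}_q$ case collapses to $(q-1)H_q(U)H_q(V)$, and the $\mbox{VI}_q$ case to $H_q(U) + H_q(V) - 2(q-1)H_q(U)H_q(V)$.

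The computation carries no genuine obstacle once Lemma \ref{lemma:largephi} is granted; the only point requiring care is the separability of the double sum $\sum_{ij}(a_i b_j/N^2)^q$, which hinges precisely on the product form $(a_i b_j)/N^2$ that the lemma produces for each cell. This factorization is the structural reason the limiting joint entropy decomposes into a function of the two marginal entropies alone, and it is what makes the clean closed forms possible.
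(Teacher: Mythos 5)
Your proposal is correct and follows essentially the same route as the paper: apply Lemma \ref{lemma:largephi} to $E[H_q(U,V)]$, factor $\sum_{ij}(a_i b_j/N^2)^q$ into the product of marginal sums, and rewrite in terms of $H_q(U)$ and $H_q(V)$, with parts (ii) and (iii) following from the expectation identities. Your substitution $A = 1-(q-1)H_q(U)$, $B = 1-(q-1)H_q(V)$ and expansion of $\tfrac{1}{q-1}(1-AB)$ is just a cleaner packaging of the paper's add-and-subtract manipulation.
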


\noindent Result i) recalls the property of non-additivity that holds true for random variables~\citep{Furiuchi2006}. Figure \ref{fig:largeNmean} shows the behaviour of $E[H_q(U,V)]$ when the partitions $U$ and $V$ are generated uniformly at random. $V$ has $c = 6$ sets and $U$ has $r$ sets. On this case, $H_q(U) + H_q(V) - (q -1) H_q(U)H_q(V)$ appears to be a good approximation already for $N = 1000$. In particular, the approximation is good when the number of records $N$ is big with regards to the number of cells of the contingency table in Table \ref{tbl:contingency}: i.e.,\ when $\frac{N}{r \cdot c}$ is large enough.
\begin{figure}[h]
\centering
\subfigure[Approximation on smaller sample size]{\includegraphics[scale=.65]{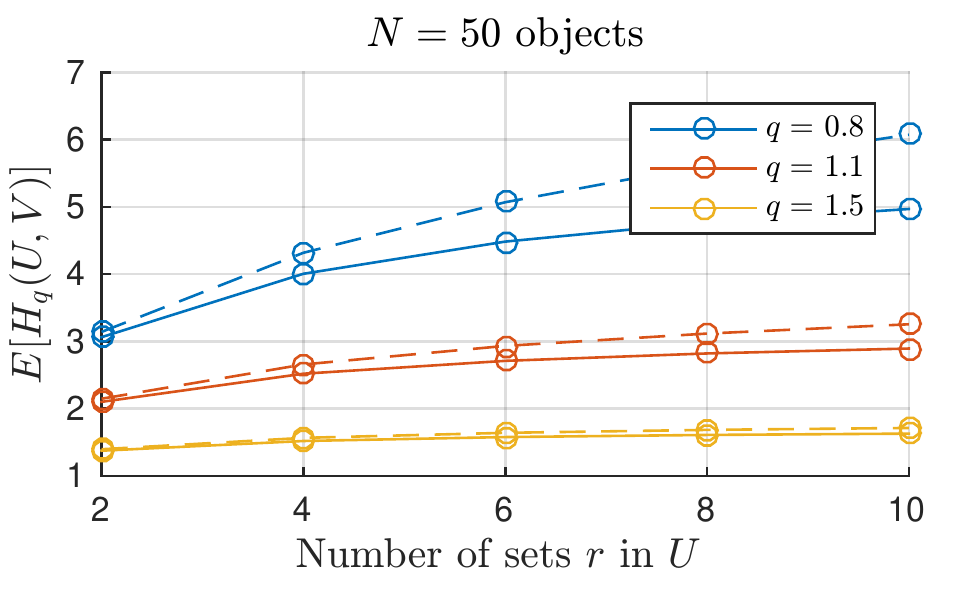} \label{fig:small}}
\quad \quad
\subfigure[Approximation on bigger sample size]{\includegraphics[scale=.65]{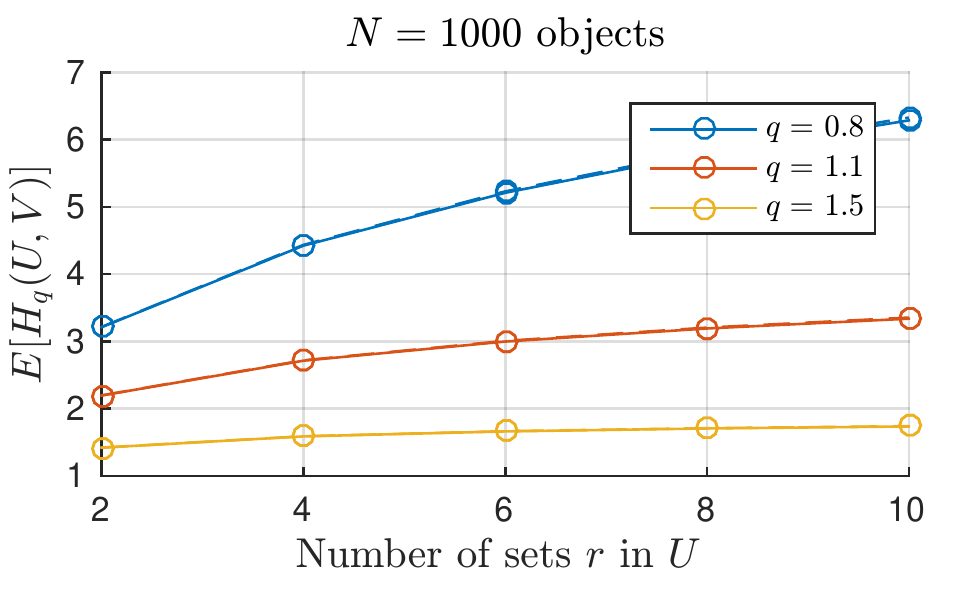} \label{fig:big}}
\caption{$E[H_q(U,V)]$ (solid) and their limit value $H_q(U) + H_q(V) - (q -1) H_q(U)H_q(V)$ (dashed). The solid line coincides approximately with the dashed one in \ref{fig:big} when $N = 1000$. The limit value is a good approximation for $E[H_q(U,V)]$ when $\frac{N}{r \cdot c} $ is large enough.} \label{fig:largeNmean}
\end{figure}

\section{Application scenarios for AMI$_q$}

In this section we aim to answer to the question: \emph{Given a reference ground truth clustering $V$, which is the best choice for $q$ in \emph{AMI}$_q(U,V)$ to validate the clustering solution $U$?} By answering this question, we implicitly identify the application scenarios for ARI and AMI given the results in Corollary~\ref{cor:adjspec}. This is particularly important for external clustering validation. Nonetheless, there are a number of other applications where the task is to find the most similar partition to a reference ground truth partition: e.g.,\ categorical feature selection~\citep{Vinh2014}, decision tree induction~\citep{Crimisini2012}, generation of alternative or multi-view clusterings~\citep{Muller2013}, or the exploration of the clustering space with the Meta-Clustering algorithm~\citep{Caruana2006,Lei2014} to list a few.

Different values for $q$ in AMI$_q$ yield to different biases. The source of these biases can be identified analyzing the properties of the $q$-entropy. In Figure~\ref{fig:qent} we show the $q$-entropy for a binary partition at the variation of the relative size $p$ of one cluster. This can be analytically computed: $H_q(p) = \frac{1}{q-1}(1 - p^q-(1-p)^q)$. \emph{The range of variation for $H_q(p)$ is much bigger if $q$ is small}. More specifically when $q$ is small, the difference in entropy between an unbalanced partition and a balanced partition is big. 
\begin{SCfigure}[2][h]
\centering
\caption{Tsallis $q$-entropy $H_q(p)$ for a binary clustering where $p$ is the relative size of one cluster. When $q$ is small, the $q$-entropy varies in bigger range.}
\includegraphics[scale=0.7]{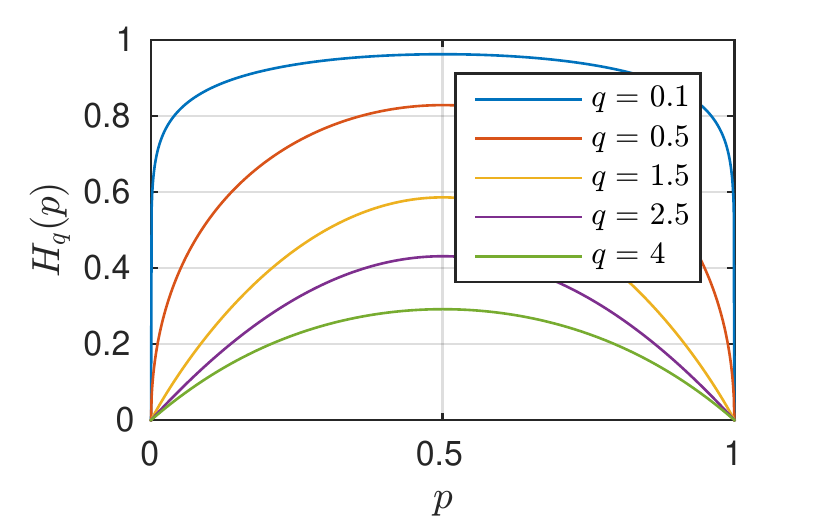}
\label{fig:qent}
\end{SCfigure}

\noindent Let us focus on an example. Let $V$ be a reference clustering with 3 clusters of size 50 each, and let $U_1$ and $U_2$ be two clustering solutions with the same number of clusters and same cluster sizes. The contingency tables for $U_1$ and $U_2$ are shown on Figure~\ref{fig:samemarginals}.
\begin{figure}[h]
\begin{minipage}{0.31\textwidth}
\begin{tabular}{c|c|ccc|}
\multicolumn{2}{c}{} & \multicolumn{3}{c}{  $V$     }\\
\cline{3-5}
\multicolumn{2}{c|}{ } & 50 & 50 & 50 \\
\cline{2-5}
\multirow{2}{*}{     }  & 50 &
50 & 0 & 0\\
$U_1$  & 50 &
0 & 44 & 6\\
 & 50 &
0 & 6 & 44\\
\cline{2-5}
\end{tabular}
\end{minipage}
\begin{minipage}{0.34\textwidth}
\begin{tabular}{c|c|ccc|}
\multicolumn{2}{c}{} & \multicolumn{3}{c}{  $V$     }\\
\cline{3-5}
\multicolumn{2}{c|}{ } & 50 & 50 & 50 \\
\cline{2-5}
\multirow{2}{*}{     }  & 50 &
48 & 1 & 1\\
$U_2$  & 50 &
1 & 46 & 3\\
 & 50 &
1 & 3 & 46\\
\cline{2-5}
\end{tabular}
\end{minipage}
\begin{minipage}{0.33\textwidth}
\includegraphics[scale=.7]{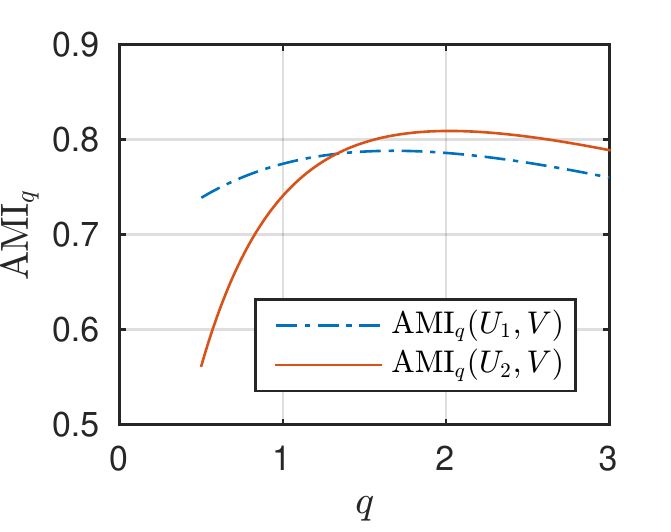}
\end{minipage}
\caption{AMI$_q$ with small $q$ prefers the solution $U_1$ because there exist one pure cluster: i.e.,\ there is one cluster which contains elements from only one cluster in the reference clustering $V$.}
\label{fig:samemarginals}
\end{figure}
Given that both contingency tables have the same marginals, the only difference between AMI$_q(U_1,V)$ and AMI$_q(U_2,V)$ according to Eq.\ \eqref{eq:adj} lies in MI$_q$. Given that both solutions $U_1$ and $U_2$ are compared against $V$, the only term that varies in $\mbox{MI}_q(U,V) = H_q(V) - H_q(V|U)$ is $H_q(V|U)$. In order to identify the clustering solution that maximizes AMI$_q$ we have to analyze the solution that decreases $H_q(V|U)$ the most. $H_q(V|U)$ is a weighted average of the entropies $H_q(V|u_i)$ computed on the rows of the contingency table as shown in Eq.\ \eqref{eq:hugivenv}, and this is sensitive to values equal to 0. Given the bigger range of variation of $H_q$ for small $q$, small $q$ implies higher sensitivity to row entropies of 0. Therefore, small values of $q$ tends to decrease $H_q(V|U)$ much more if the clusters in the solution $V$ are pure: i.e.,\ clusters contain elements from only one cluster in the reference clustering $V$. In other words, \emph{\emph{AMI}$_q$ with small $q$ prefers pure clusters in the clustering solution.}

When the marginals in the contingency tables for two solutions are different, another important factor in the computation of AMI$_q$ is the normalization coefficient $\frac{1}{2}(H_q(U) + H_q(V))$. Balanced solutions $V$ will be penalized more by AMI$_q$ when $q$ is small. Therefore, \emph{\emph{AMI}$_q$ with small $q$ prefers unbalanced clustering solutions}. To summarize, AMI$_q$ with small $q$ such as AMI$_{0.5}$ or $\mbox{AMI}_1 =\mbox{AMI}$ with Shannon's entropy:
\begin{itemize}[topsep=1ex,itemsep=0ex,partopsep=1ex,parsep=1ex]
\item Is biased towards pure clusters in the clustering solutions;
\item Prefers unbalanced clustering solutions.
\end{itemize}
By contrary, AMI$_q$ with bigger $q$ such as AMI$_{2.5}$ or $\mbox{AMI}_{2} =\mbox{ARI}$:
\begin{itemize}[topsep=1ex,itemsep=0ex,partopsep=1ex,parsep=1ex]
\item Is less biased towards pure clusters in the clustering solution;
\item Prefers balanced clustering solutions.
\end{itemize}
Given a reference clustering $V$, these biases can guide the choice of $q$ in AMI$_q$ to identify more suitable clustering solutions.

\subsection{Use AMI$_q$ with small $q$ such as AMI$_{0.5}$ or $\mbox{AMI}_1 =\mbox{AMI}$ when the reference clustering is unbalanced and there exist small clusters}

If the reference cluster $V$ is unbalanced and presents small clusters, AMI$_q$ with small $q$ might prefer more appropriate clustering solutions $U$. For example, in Figure~\ref{fig:betterAMI} we show two contingency tables associated to two clustering solutions $U_1$ and $U_2$ for the reference clustering $V$ with 4 clusters of size $[10, 10, 10, 70]$ respectively. When there exist small clusters in the reference $V$ their identification has to be \emph{precise} in the clustering solution. The solution $U_1$ looks arguably better than $U_2$ because it shows many pure clusters. In this scenario we advise the use of AMI$_{0.5}$ or $\mbox{AMI}_1 =\mbox{AMI}$ with Shannon's entropy because it gives more weight to the clustering solution $U_1$.
\begin{figure}[h]
\begin{minipage}{0.31\textwidth}
\small
\begin{tabular}{c|c|cccc|}
\multicolumn{2}{c}{} & \multicolumn{4}{c}{     $V$     }\\
\cline{3-6}
\multicolumn{2}{c|}{ } & $10$ & $10$ & $10$ & $70$ \\
\cline{2-6}
\multirow{4}{*}{    $U_1$       }  
& $8$ & $8$ & $0$ & $0$ & $0$ \\
& $7$ & $0$  & $7$ & $0$ & $0$ \\
& $7$ & $0$  & $0$ & $7$ & $0$ \\
& $78$ & $2$  & $3$ & $3$ & $70$ \\
\cline{2-6}
\end{tabular}
\end{minipage}
\begin{minipage}{0.34\textwidth}
\small
\begin{tabular}{c|c|cccc|}
\multicolumn{2}{c}{} & \multicolumn{4}{c}{     $V$     }\\
\cline{3-6}
\multicolumn{2}{c|}{ } & $10$ & $10$ & $10$ & $70$ \\
\cline{2-6}
\multirow{4}{*}{    $U_2$       }  
& $10$ & $7$ & $1$ & $1$ & $1$ \\
& $10$ & $1$  & $7$ & $1$ & $1$ \\
& $10$ & $1$  & $1$ & $7$ & $1$ \\
& $70$ & $1$  & $1$ & $1$ & $67$ \\
\cline{2-6}
\end{tabular}
\end{minipage}
\begin{minipage}{0.33\textwidth}
\includegraphics[scale=.7]{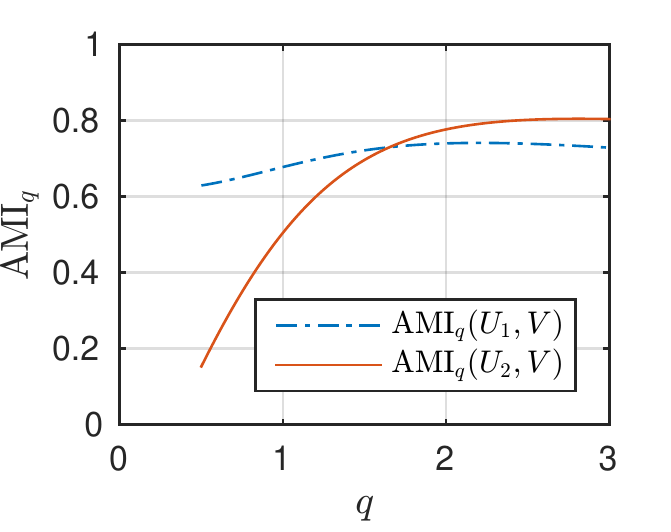}
\end{minipage}
\caption{AMI$_q$ with small $q$ prefers the solution $U_1$ because its clusters are pure. When the reference clustering has small clusters their identification in the solution has to be \emph{precise}. In this scenario we advise the use of AMI$_{0.5}$ or $\mbox{AMI}_1 =\mbox{AMI}$.}
\label{fig:betterAMI}
\end{figure}

\subsection{Use AMI$_q$ with big $q$ such as AMI$_{2.5}$ or $\mbox{AMI}_2 =\mbox{ARI}$ when the reference clustering has big equal sized clusters}

If $V$ is a reference clustering with big equal size clusters it is less crucial to have precise clusters in the solution. Indeed, precise clusters in the solution penalize the \emph{recall} of clusters in the reference. In this case, AMI$_q$ with bigger $q$ might prefer more appropriate solutions.
\begin{figure}[h]
\begin{minipage}{0.31\textwidth}
\small
\begin{tabular}{c|c|cccc|}
\multicolumn{2}{c}{} & \multicolumn{4}{c}{     $V$     }\\
\cline{3-6}
\multicolumn{2}{c|}{ } & $25$ & $25$ & $25$ & $25$ \\
\cline{2-6}
\multirow{4}{*}{    $U_1$       }  
& $17$ & $17$ & $0$ & $0$ & $0$ \\
& $17$ & $0$  & $17$ & $0$ & $0$ \\
& $17$ & $0$  & $0$ & $17$ & $0$ \\
& $49$ & $8$  & $8$ & $8$ & $25$ \\
\cline{2-6}
\end{tabular}
\end{minipage}
\begin{minipage}{0.34\textwidth}
\small
\begin{tabular}{c|c|cccc|}
\multicolumn{2}{c}{} & \multicolumn{4}{c}{     $V$     }\\
\cline{3-6}
\multicolumn{2}{c|}{ } & $25$ & $25$ & $25$ & $25$ \\
\cline{2-6}
\multirow{4}{*}{    $U_2$       }  
& $24$ & $20$ & $2$ & $1$ & $1$ \\
& $25$ & $2$  & $20$ & $2$ & $1$ \\
& $23$ & $1$  & $1$ & $20$ & $1$ \\
& $28$ & $2$  & $2$ & $2$ & $22$ \\
\cline{2-6}
\end{tabular}
\end{minipage}
\begin{minipage}{0.33\textwidth}
\includegraphics[scale=.7]{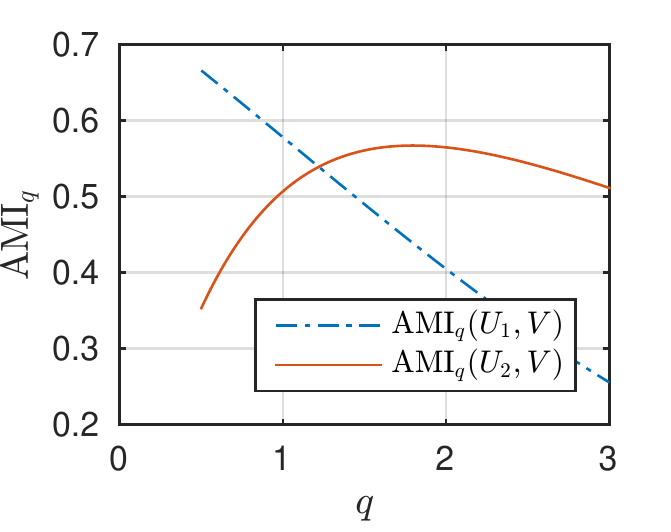}
\end{minipage}
\caption{AMI$_q$ with big $q$ prefers the solution $U_2$ because it is less biased to pure clusters in the solution. When the reference clustering has big equal sized clusters their precise identification is less crucial. In this scenario we advise the use of AMI$_{2.5}$ or $\mbox{AMI}_2 =\mbox{ARI}$.}
\label{fig:betterARI}
\end{figure}
In Figure~\ref{fig:betterARI} we show two clustering solutions $U_1$ and $U_2$ for the reference clustering $V$ with 4 equal size clusters of size 25. The solution $U_2$ looks better than $U_1$ because each of its clusters identify more elements from particular clusters in the reference. Moreover, $U_2$ has to be preferred to $U_1$ because it consists in 4 equal sized clusters like for the reference clustering $V$. In this scenario we advise the use of AMI$_{2.5}$ or $\mbox{AMI}_2 =\mbox{ARI}$ because it gives more importance to the solution $U_2$.

\section{Standardization of Clustering Comparison Measures}

Selection of the most similar partition $U$ to a reference partition $V$ is biased according to the chosen similarity measure, the number of sets $r$ in $U$, and their relative size. This phenomena is known as \emph{selection bias} and it has been extensively studied in decision trees~\citep{White1994}. Researchers in this area agree that in order to achieve unbiased selection of partitions, distribution properties of similarity measures have to be taken into account~\citep{Dobra01, Shih2004, Hothorn2006}.
Using the permutation model, we proposed in~\cite{Romano2014} to analytically standardize the Shannon MI by subtraction of its expected value and division by its standard deviation. In this Section, we discuss how to achieve analytical standardization of measures $S \in \mathcal{L}_\phi$.

In order to standardize measures $S(U,V)$ we must analytically compute their variance:
\begin{restatable}{lem}{lemanalalpha} \label{lem:analalpha}
If $S(U,V) \in \mathcal{L}_\phi$, when partitions $U$ and $V$ are random:
\begin{align*}
\mbox{\emph{Var}}(S(U,V)) =
\beta^2\Big( E\Big[ \Big(\sum_{ij} \phi_{ij}(n_{ij})\Big)^2 \Big] - \Big( \sum_{ij} E[\phi_{ij}(n_{ij})] \Big)^2 \Big) \text{where } E \Big[ \Big( \sum_{ij} \phi_{ij}(n_{ij}) \Big)^2\Big] \text{is }
\end{align*}
\begin{align} 
\sum_{ij} \sum_{ n_{ij}}   \phi(n_{ij}) P(n_{ij}) \cdot \Bigg[&  \phi_{ij}(n_{ij}) + \sum_{i'\neq i} \sum_{\tilde{n}_{i'j} }\phi_{i'j}(\tilde{n}_{i'j}) P(\tilde{n}_{i'j}) + \nonumber\\
& + \sum_{j'\neq j}  \sum_{\tilde{n}_{ij'} } P(\tilde{n}_{ij'}) \Bigg( \phi_{ij'}(\tilde{n}_{ij'}) + \sum_{i' \neq i} \sum_{\tilde{\tilde{n}}_{i'j'} }\phi_{i'j'}(\tilde{\tilde{n}}_{i'j'})P(\tilde{\tilde{n}}_{i'j'}) \Bigg) \Bigg] \label{eq:enij2alpha}
\end{align}
where $n_{ij} \sim \mbox{\emph{Hyp}}(a_i,b_j,N)$, $\tilde{n}_{i'j} \sim \mbox{\emph{Hyp}}(b_j - n_{ij}, a_{i'}, N - a_i)$, $\tilde{n}_{ij'} \sim \mbox{\emph{Hyp}}(a_i - n_{ij}, b_{j'}, N - b_j)$, $\tilde{\tilde{n}}_{i'j'} \sim \mbox{\emph{Hyp}}(a_{i'},b_{j'} - \tilde{n}_{ij'}, N - a_i)$ are hypergeometric random variables.
\end{restatable}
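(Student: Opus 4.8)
The plan is to reduce the variance to a sum of pairwise joint expectations and to evaluate each of these through the law of total expectation, reading off the conditional laws of the contingency-table entries directly from the permutation model. Since $\alpha$ and $\beta$ do not depend on the $n_{ij}$, we immediately have $\mbox{Var}(S(U,V)) = \beta^2\,\mbox{Var}\big(\sum_{ij}\phi_{ij}(n_{ij})\big)$, and by the identity $\mbox{Var}(X) = E[X^2] - (E[X])^2$ the subtracted term $\big(\sum_{ij} E[\phi_{ij}(n_{ij})]\big)^2$ is supplied verbatim by Lemma \ref{lemma:emalpha}. Everything therefore hinges on the second moment $E\big[\big(\sum_{ij}\phi_{ij}(n_{ij})\big)^2\big]$, which I would expand by linearity into $\sum_{ij}\sum_{i'j'} E\big[\phi_{ij}(n_{ij})\,\phi_{i'j'}(n_{i'j'})\big]$; since the contingency table is finite and the $\phi_{ij}$ are bounded, no convergence issues arise.

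The next step is to split this double sum over cell pairs into four exhaustive and disjoint cases according to how the two cells share indices: (a) the same cell $i'=i,\,j'=j$; (b) the same column, different row $i'\neq i,\,j'=j$; (c) the same row, different column $i'=i,\,j'\neq j$; and (d) distinct row and column $i'\neq i,\,j'\neq j$. Grouping the terms in exactly this order reproduces the bracketed, nested structure of Eq.\ \eqref{eq:enij2alpha}: the leading $\phi_{ij}(n_{ij})$ inside the bracket is case (a), the first inner sum over $i'\neq i$ is case (b), and the sum over $j'\neq j$ carries cases (c) and (d), with (d) appearing as the innermost sum over $i'\neq i$.

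The core of the argument is to identify, for each case, the joint law of the relevant pair of entries, which I would obtain by the tower property, conditioning on cell values one at a time. Under the permutation (hypergeometric) model the entries form a multivariate hypergeometric array with fixed marginals, so a single entry satisfies $n_{ij}\sim\mbox{Hyp}(a_i,b_j,N)$, which is exactly the distribution underlying Eq.\ \eqref{eq:ephinij}. Conditioning on $n_{ij}$ and removing the $a_i$ objects of row $i$ leaves a population of $N-a_i$ objects containing $b_j-n_{ij}$ objects of column $j$; drawing row $i'$ from this reduced population gives $\tilde n_{i'j}\sim\mbox{Hyp}(b_j-n_{ij},a_{i'},N-a_i)$, handling case (b). By the transpose symmetry of the table, conditioning on $n_{ij}$ and restricting to the $N-b_j$ objects outside column $j$ gives $\tilde n_{ij'}\sim\mbox{Hyp}(a_i-n_{ij},b_{j'},N-b_j)$ for case (c). Finally, for case (d) I would condition successively along the path $(i,j)\to(i,j')\to(i',j')$; after fixing $\tilde n_{ij'}$ and again excising row $i$, the residual population of $N-a_i$ objects contains $b_{j'}-\tilde n_{ij'}$ objects of column $j'$, yielding $\tilde{\tilde n}_{i'j'}\sim\mbox{Hyp}(a_{i'},b_{j'}-\tilde n_{ij'},N-a_i)$. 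Substituting these conditional laws into the iterated expectations and collecting terms produces Eq.\ \eqref{eq:enij2alpha} exactly.

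I expect the main obstacle to be the rigorous justification of the nested conditional distributions, and in particular case (d), where two levels of conditioning are needed and one must verify that after removing row $i$ the surviving objects in column $j'$ number precisely $b_{j'}-\tilde n_{ij'}$ and remain exchangeable, so that the reduced draw is genuinely hypergeometric. The bookkeeping of which objects survive each conditioning step, together with checking that the same-cell, same-row, and same-column reductions are mutually consistent, is where care is required; the remaining manipulations are routine applications of linearity and the law of total expectation.
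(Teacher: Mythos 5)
Your proposal is correct and follows essentially the same route as the paper's proof: reduce to the second moment via $\mathrm{Var}(X)=E[X^2]-(E[X])^2$ and Lemma~\ref{lemma:emalpha}, expand $E[(\sum_{ij}\phi_{ij}(n_{ij}))^2]$ into pairwise terms, split into the same four index cases, and evaluate each via the tower property with exactly the conditional hypergeometric laws $\tilde{n}_{i'j}$, $\tilde{n}_{ij'}$, and $\tilde{\tilde{n}}_{i'j'}$ that the paper derives, grouping cases (c) and (d) into the nested $j'\neq j$ sum just as in Eq.~\eqref{eq:enij2alpha}. The ``obstacle'' you flag (justifying the two-level conditioning in case (d)) is handled in the paper at the same level of rigor you describe, by tracking the surviving population of $N-a_i$ objects with $b_{j'}-\tilde{n}_{ij'}$ remaining successes.
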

\noindent We can use the expected value to standardize $S \in \mathcal{L}_\phi$, such as generalized IT measures.

\subsection{Standardization of Generalized IT Measures}

The variance under the permutation model of generalized IT measures is:
\begin{restatable}{thm}{thmvarsvi} \label{thm:varsvi} Using Eqs. \eqref{eq:ephinij} and \eqref{eq:enij2alpha} with $\phi_{ij}(\cdot) = (\cdot)^q$, when the partitions $U$ and $V$ are random:
\begin{enumerate}[topsep=0pt,itemsep=0ex,partopsep=1ex,parsep=1ex,label=\roman*)]
\item 
$\mbox{\emph{Var}}(H_q(U,V)) = \frac{1}{(q -1)^2N^{2q}} \Big( E[ (\sum_{ij} n_{ij}^q)^2] - ( \sum_{ij} E[n_{ij}^q ] )^2 \Big)$;
\item $\mbox{\emph{Var}}(\mbox{\emph{MI}}_q(U,V)) = \mbox{\emph{Var}}(H_q(U,V))$ 
\item $\mbox{\emph{Var}}(\mbox{\emph{VI}}_q(U,V)) = 4\mbox{\emph{Var}}(H_q(U,V))$
\end{enumerate}
\end{restatable}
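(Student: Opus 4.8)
The plan is to reduce all three identities to the single variance formula of Lemma \ref{lem:analalpha}, exploiting the fact that the permutation model holds the marginals fixed.

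First I would rewrite the joint $q$-entropy as
\[
H_q(U,V) = \frac{1}{q-1} - \frac{1}{(q-1)N^q}\sum_{ij} n_{ij}^q,
\]
which exhibits it as a member of $\mathcal{L}_\phi$ with $\alpha = \frac{1}{q-1}$, $\beta = -\frac{1}{(q-1)N^q}$, and $\phi_{ij}(\cdot) = (\cdot)^q$. Applying Lemma \ref{lem:analalpha} directly and using $\beta^2 = \frac{1}{(q-1)^2 N^{2q}}$ gives part i) at once: the variance equals $\beta^2$ times the gap between the second moment $E[(\sum_{ij} n_{ij}^q)^2]$, expanded via Eq.\ \eqref{eq:enij2alpha}, and the squared first moment $(\sum_{ij} E[n_{ij}^q])^2$, with each $E[n_{ij}^q]$ supplied by Eq.\ \eqref{eq:ephinij}.

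For parts ii) and iii) the crucial observation is that under the permutation model both the row marginals $a_i$ and the column marginals $b_j$ are fixed, so $H_q(U) = \frac{1}{q-1}(1 - \sum_i (a_i/N)^q)$ and $H_q(V) = \frac{1}{q-1}(1 - \sum_j (b_j/N)^q)$ are deterministic constants; only $H_q(U,V)$ carries randomness, through the interior cells $n_{ij}$. Invoking the decomposition $\mbox{MI}_q(U,V) = H_q(U) + H_q(V) - H_q(U,V)$ from Eq.\ \eqref{eq:mi_beta}, I would note that $\mbox{MI}_q$ is a constant minus $H_q(U,V)$, so by invariance of the variance under additive shifts and sign change, $\mbox{Var}(\mbox{MI}_q) = \mbox{Var}(H_q(U,V))$, establishing ii). Similarly, Eq.\ \eqref{eq:vi_beta} gives $\mbox{VI}_q(U,V) = 2H_q(U,V) - H_q(U) - H_q(V)$, a constant plus twice $H_q(U,V)$; the scaling rule $\mbox{Var}(aX+b) = a^2\mbox{Var}(X)$ with $a=2$ then yields $\mbox{Var}(\mbox{VI}_q) = 4\mbox{Var}(H_q(U,V))$, which is iii).

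I do not expect a genuine obstacle here, since the heavy combinatorial work --- expanding the second moment into the nested hypergeometric sums of Lemma \ref{lem:analalpha} --- is already done. The only point demanding care is the recognition that $H_q(U)$ and $H_q(V)$ contribute zero variance under the permutation model; once this is in hand, parts ii) and iii) collapse to elementary scaling properties of the variance.
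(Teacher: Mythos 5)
Your proposal is correct and follows exactly the route the paper takes: its proof of this theorem is the one-line remark that the results follow from Lemma \ref{lem:analalpha}, the fixed-marginals hypothesis, and elementary properties of the variance, which is precisely what you have spelled out (identifying $\alpha$, $\beta$, $\phi_{ij}$ for $H_q(U,V)$, noting $H_q(U)$ and $H_q(V)$ are deterministic, and applying shift/scaling rules). No gaps.
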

\noindent We define the standardized version of the similarity measure $\mbox{MI}_q$ (SMI$_q$), and the standardized version of the distance measure VI$_q$ (SVI$_q$) as follows:
\begin{equation}
\mbox{SMI}_q \triangleq \frac{\mbox{MI}_q - E[\mbox{MI}_q]}{\sqrt{\mbox{Var}(\mbox{MI}_q)}}, \quad \mbox{SVI}_q \triangleq \frac{E[\mbox{VI}]_q - \mbox{VI}_q}{\sqrt{\mbox{Var}(\mbox{VI}_q)}},
\end{equation} 
As for the case of AMI$_q$ and AVI$_q$, it turns out that SMI$_q$ is equal to SVI$_q$:
\begin{restatable}{thm}{thmsvianal}
Using Eqs. \eqref{eq:ephinij} and \eqref{eq:enij2alpha} with $\phi_{ij}(\cdot) = (\cdot)^q$, the standardized \emph{MI}$_q(U,V)$ and the standardized \emph{VI}$_q(U,V)$ are:
\begin{equation} \label{eq:smianal}
\mbox{\emph{SMI}}_q(U,V) = \mbox{\emph{SVI}}_q(U,V) = \frac{\sum_{ij}n_{ij}^q - \sum_{ij}E[n_{ij}^q]}{ \sqrt{ E[(\sum_{ij}n_{ij}^q)^2] - (\sum_{ij}E[n_{ij}^q])^2 }}
\end{equation}
\end{restatable}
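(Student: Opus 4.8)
The plan is to reduce both standardized quantities to a single common expression by exploiting the fact that $\mbox{MI}_q$, $\mbox{VI}_q$, and $H_q(U,V)$ differ only by the deterministic marginal entropies $H_q(U)$ and $H_q(V)$, which are fixed under the permutation model. First I would compute the numerator of $\mbox{SMI}_q$. Since $\mbox{MI}_q(U,V)=H_q(U)+H_q(V)-H_q(U,V)$ and, by Theorem~\ref{thm:expvi}(ii), $E[\mbox{MI}_q]=H_q(U)+H_q(V)-E[H_q(U,V)]$, the marginal terms cancel and the numerator collapses to $E[H_q(U,V)]-H_q(U,V)$. Substituting the definition $H_q(U,V)=\tfrac{1}{q-1}\bigl(1-N^{-q}\sum_{ij}n_{ij}^q\bigr)$ together with its expectation from Theorem~\ref{thm:expvi}(i), this gives $\mbox{MI}_q-E[\mbox{MI}_q]=\tfrac{1}{(q-1)N^{q}}\bigl(\sum_{ij}n_{ij}^q-\sum_{ij}E[n_{ij}^q]\bigr)$.

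Next I would handle the denominator using Theorem~\ref{thm:varsvi}(ii), which gives $\mbox{Var}(\mbox{MI}_q)=\mbox{Var}(H_q(U,V))=\tfrac{1}{(q-1)^2N^{2q}}\bigl(E[(\sum_{ij}n_{ij}^q)^2]-(\sum_{ij}E[n_{ij}^q])^2\bigr)$, so that $\sqrt{\mbox{Var}(\mbox{MI}_q)}=\tfrac{1}{|q-1|N^{q}}\sqrt{E[(\sum_{ij}n_{ij}^q)^2]-(\sum_{ij}E[n_{ij}^q])^2}$. Forming the ratio, the factor $N^{-q}$ cancels outright and the factor $\tfrac{1}{q-1}$ in the numerator cancels against $\tfrac{1}{|q-1|}$ in the denominator, leaving exactly the claimed expression of Eq.~\eqref{eq:smianal}.

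For $\mbox{SVI}_q$ I would repeat the same steps. From $\mbox{VI}_q=2H_q(U,V)-H_q(U)-H_q(V)$ and Theorem~\ref{thm:expvi}(iii), the numerator $E[\mbox{VI}_q]-\mbox{VI}_q$ equals $2\bigl(E[H_q(U,V)]-H_q(U,V)\bigr)$, that is, twice the $\mbox{SMI}_q$ numerator; and by Theorem~\ref{thm:varsvi}(iii) the denominator $\sqrt{\mbox{Var}(\mbox{VI}_q)}=\sqrt{4\,\mbox{Var}(H_q(U,V))}=2\sqrt{\mbox{Var}(H_q(U,V))}$ is twice the $\mbox{SMI}_q$ denominator. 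The common factor $2$ cancels, establishing $\mbox{SVI}_q=\mbox{SMI}_q$ and hence that both equal Eq.~\eqref{eq:smianal}.

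The computation is essentially bookkeeping once Theorems~\ref{thm:expvi} and~\ref{thm:varsvi} are in hand, so the only point demanding care is the sign carried by $\tfrac{1}{q-1}$ in the numerator versus the nonnegative square root $\tfrac{1}{|q-1|}\sqrt{\cdot}$ in the denominator: the cancellation is clean when $q>1$, the regime in which $\mbox{MI}_q$ is guaranteed nonnegative, and for $q<1$ one must check that the orientation of the standardized distance and similarity match so that no spurious sign survives. This, rather than any substantive difficulty, is the subtle step I expect to require attention; everything else is direct substitution followed by cancellation of the shared $N^{-q}$ and $|q-1|$ factors.
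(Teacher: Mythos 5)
Your proof is correct and follows essentially the same route as the paper: reduce both numerators to multiples of $\sum_{ij}n_{ij}^q - \sum_{ij}E[n_{ij}^q]$ via Theorem~\ref{thm:expvi}, reduce both denominators via Theorem~\ref{thm:varsvi}, and cancel the common $\tfrac{1}{(q-1)N^{q}}$ factors (and the extra factor $2$ for $\mbox{SVI}_q$). Your caveat about $|q-1|$ versus $q-1$ when $q<1$ is a genuine subtlety that the paper's own proof silently elides by writing the standard deviation as $\tfrac{1}{(q-1)N^{q}}\sqrt{\cdots}$ without an absolute value, so the displayed identity holds sign-exactly only for $q>1$.
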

\noindent This formula shows that we are interested in maximizing the difference between the sum of the cells of the actual contingency table and the sum of the expected cells under randomness. However, standardized measures differs from their adjusted counterpart because of the denominator, i.e.\ the standard deviation of the sums of the cells. Indeed, SMI$_q$ and SVI$_q$ measure the number of standard deviations MI$_q$ and VI$_q$ are from their mean.

There are some notable special cases for particular choices of $q$.
Indeed, our generalized standardization of IT measures allows us to generalize also the standardization of pair-counting measures such as the Rand Index. To see this, let us define the Standardized Rand Index (SRI): $\mbox{SRI} \triangleq \frac{\textup{RI} - E[\textup{RI}]}{\sqrt{\textup{Var}(\textup{RI})}}$ and recall that the standardized $G$-statistic is defined as $\mbox{S}G \triangleq \frac{G - E[G]}{\sqrt{\textup{Var}(G)}}$~\citep{Romano2014}:
\begin{restatable}{cor}{corspec} \label{cor:specsvi}
It holds true that:\\
i) $\lim_{q \rightarrow 1} \mbox{\emph{SMI}}_q = \lim_{q \rightarrow 1} \mbox{\emph{SVI}}_q = \mbox{\emph{SMI}} = \mbox{\emph{SVI}} = \mbox{\emph{S}}G$ with Shannon entropy; \\ 
ii) $\mbox{\emph{SMI}}_2 = \mbox{\emph{SVI}}_2 = \mbox{\emph{SRI}}$.
\end{restatable}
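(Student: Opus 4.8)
The plan is to derive both parts directly from the closed form \eqref{eq:smianal}, exploiting the fact that standardization $X \mapsto (X - E[X])/\sqrt{\mathrm{Var}(X)}$ is invariant under any affine reparametrization $X \mapsto a + bX$ up to the factor $\mathrm{sign}(b)$: the additive constant $a$ cancels in the numerator, and $|b|$ cancels between numerator and denominator. Since \eqref{eq:smianal} already gives $\mathrm{SMI}_q = \mathrm{SVI}_q$, it suffices in each part to identify the underlying statistic $\sum_{ij} n_{ij}^q$ (to which $\mathrm{MI}_q$ is affine when the marginals are fixed) with the relevant Shannon or pair-counting quantity.

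For part ii) I would specialize \eqref{eq:smianal} to $q = 2$, giving $(\sum_{ij} n_{ij}^2 - \sum_{ij} E[n_{ij}^2])/\sqrt{\mathrm{Var}(\sum_{ij} n_{ij}^2)}$. Using the definitions of $k_{11}$ and $k_{00}$, under the permutation model (fixed marginals $a_i, b_j$) the Rand index is an affine, strictly increasing function $\mathrm{RI} = \binom{N}{2}^{-1}\big(\sum_{ij} n_{ij}^2 + C\big)$, where $C$ depends only on $N$ and the marginals. Affine invariance then makes $\mathrm{SRI}$ coincide with the $q = 2$ form above, i.e.\ $\mathrm{SMI}_2 = \mathrm{SVI}_2 = \mathrm{SRI}$; alternatively one may route this through Proposition \ref{prop:vi_ri}, noting that the two sign reversals (from $\mathrm{RI} \propto -\mathrm{VI}_2$ and from the definition of $\mathrm{SVI}_q$) cancel.

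Part i) is the substantive step. Writing $n_{ij}^q = n_{ij}\, e^{(q-1)\ln n_{ij}}$ (with the convention $0\ln 0 = 0$), a first-order expansion about $q = 1$ gives $\sum_{ij} n_{ij}^q = N + (q-1)g + O((q-1)^2)$, where $g \triangleq \sum_{ij} n_{ij}\ln n_{ij}$. The leading term $N$ is deterministic, so it cancels in the numerator of \eqref{eq:smianal}, which becomes $(q-1)(g - E[g]) + O((q-1)^2)$; the variance in the denominator is $(q-1)^2\mathrm{Var}(g) + O((q-1)^3)$. Cancelling the common factor and letting $q \to 1$ leaves $(g - E[g])/\sqrt{\mathrm{Var}(g)}$. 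I would then identify this limit with the Shannon quantity: since $H(U,V) = \ln N - g/N$, the Shannon $\mathrm{MI}$ is affine and increasing in $g$ (fixed marginals), whence $(g - E[g])/\sqrt{\mathrm{Var}(g)} = \mathrm{SMI}$; and since $G = 2N\,\mathrm{MI}$ and $\mathrm{VI} = H(U)+H(V) - 2\,\mathrm{MI}$ are affine in $\mathrm{MI}$, affine invariance (with the sign reversal in the definition of $\mathrm{SVI}$ offsetting the negative coefficient) gives $\mathrm{SMI} = \mathrm{S}G = \mathrm{SVI}$, completing i).

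The hard part will be the $0/0$ indeterminacy in part i): because $\sum_{ij} n_{ij}^q \to N$ as $q \to 1$, both numerator and denominator of \eqref{eq:smianal} vanish, so the claim is not immediate and the expansion above is essential. Two points need care. First, the sign: the denominator scales like $|q-1|\sqrt{\mathrm{Var}(g)}$, so the ratio carries a factor $\mathrm{sign}(q-1)$; the stated value corresponds to approaching within the range $q > 1$ (the regime in which $\mathrm{MI}_q$ is the natural non-negative generalization and in which the unsigned form \eqref{eq:smianal} holds), and I would phrase the limit accordingly. Second, interchanging the limit with $E[\cdot]$ and $\mathrm{Var}(\cdot)$: this is harmless because each $n_{ij}$ ranges over the finite set $\{0,\dots,N\}$, so all expectations are finite sums and the $O((q-1)^2)$ remainder is uniform, with constants depending only on $N$.
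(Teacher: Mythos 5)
Your proof is correct. For part ii) it is essentially the paper's argument: the paper routes through Proposition \ref{prop:vi_ri}, writing $\mbox{SVI}_2 = \frac{E[\mbox{VI}_2]-\mbox{VI}_2}{\sqrt{\mbox{Var}(\mbox{VI}_2)}}$ and cancelling the common factor $\frac{N-1}{N}$ against $\mbox{RI}-E[\mbox{RI}]$ and $\sqrt{\mbox{Var}(\mbox{RI})}$, which is the same affine-invariance observation you make one step earlier via $\mbox{RI}$ being affine and increasing in $\sum_{ij} n_{ij}^2$. For part i), however, you take a genuinely different and more explicit route. The paper's entire proof is the assertion that the claim ``follows from the limit of the $q$-entropy when $q\to 1$'' together with $G=2N\,\mbox{MI}$; that is, it takes the limit inside the definition $\mbox{SMI}_q = (\mbox{MI}_q - E[\mbox{MI}_q])/\sqrt{\mbox{Var}(\mbox{MI}_q)}$, where numerator and denominator each converge separately to the corresponding Shannon quantities and there is no indeterminacy (the interchange of limit with $E[\cdot]$ and $\mbox{Var}(\cdot)$ being harmless for exactly the finiteness reason you give). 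You instead work from the closed form \eqref{eq:smianal}, where the limit genuinely is $0/0$ since $\sum_{ij} n_{ij}^q \to N$, and resolve it by the expansion $\sum_{ij} n_{ij}^q = N + (q-1)\sum_{ij} n_{ij}\ln n_{ij} + O((q-1)^2)$, identifying the limiting statistic $g=\sum_{ij} n_{ij}\ln n_{ij}$ as an affine increasing reparametrization of the Shannon $\mbox{MI}$ (via $H(U,V)=\ln N - g/N$). This costs more work but makes rigorous what the paper leaves implicit. Your sign caveat is also well placed and in fact exposes a subtlety the paper glosses over: the unsigned form \eqref{eq:smianal} is only valid for $q>1$ (for $q<1$ the factor $\frac{1}{(q-1)N^q}$ in the numerator is negative while the standard deviation in the denominator must be positive), so the limit of \eqref{eq:smianal} is one-sided, whereas the limit of $\mbox{SMI}_q$ taken directly from its definition is two-sided; restricting to $q\to 1^+$, as you do, reconciles the two. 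Both routes are valid; yours buys rigor at the closed-form level, the paper's buys brevity at the definitional level.
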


\paragraph{Computational complexity:} The computational complexity of $\mbox{SVI}_q$ is dominated by computation of the second moment of the sum of the cells defined in Eq.\ \eqref{eq:enij2alpha}:
\begin{restatable}{prop}{complvar}
The computational complexity of $\mbox{\emph{SVI}}_q$ is $O( N^3c \cdot \max{ \{c, r\} })$.
\end{restatable}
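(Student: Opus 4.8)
The plan is to read the cost directly off the closed form in Eq.~\eqref{eq:smianal} and to show it is governed entirely by the evaluation of the second moment $E[(\sum_{ij} n_{ij}^q)^2]$ given in Eq.~\eqref{eq:enij2alpha}. First I would dispose of the cheap pieces. The numerator requires $\sum_{ij} n_{ij}^q$, read off the contingency table in $O(rc)$ operations, together with $\sum_{ij} E[n_{ij}^q]$ computed from Eq.~\eqref{eq:ephinij}; the latter costs $O(N\cdot\max\{r,c\})$ by the same telescoping argument already used to bound $\mbox{AMI}_q$. Both are dominated, so the complexity of $\mbox{SVI}_q$ coincides with that of evaluating Eq.~\eqref{eq:enij2alpha}.

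Next I would exploit the single structural fact that makes the count work: each hypergeometric variable appearing in Eq.~\eqref{eq:enij2alpha} has a support bounded by a marginal. Concretely, $n_{ij}$ ranges over at most $\min\{a_i,b_j\}+1$ values, $\tilde{n}_{ij'}$ over at most $b_{j'}+1$, $\tilde{\tilde{n}}_{i'j'}$ over at most $a_{i'}+1$, and $\tilde{n}_{i'j}$ over at most $a_{i'}+1$. Because $\sum_i a_i=\sum_j b_j=N$, summing a support size over a full row or column telescopes to $O(N)$; this is precisely the mechanism that prevents the naive product of all five nested sums, which would read $O(r^2c^2N^3)$, from being the true cost.

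I would then bound the dominant contribution, namely the innermost term of the bracket corresponding to the case $i'\neq i,\ j'\neq j$, as a product of three factors. The outer pair $\sum_{ij}\sum_{n_{ij}}$ contributes $O(N\cdot\max\{r,c\})$ by the telescoping bound, identical to the $\mbox{AMI}_q$ analysis. For a fixed outer iteration, the innermost row sum $\sum_{i'\neq i}\sum_{\tilde{\tilde{n}}_{i'j'}}$ contributes only $O(N)$, since $\sum_{i'}(a_{i'}+1)=O(N)$ absorbs the row index into the value sum, while the middle sum $\sum_{j'\neq j}\sum_{\tilde{n}_{ij'}}$ I would bound more coarsely as $O(cN)$, retaining the factor $c$ and bounding each inner value support by $N$; the body of the term therefore costs $O(cN^2)$ per outer iteration. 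Multiplying, $O(N\cdot\max\{r,c\})\cdot O(cN^2)=O(N^3 c\cdot\max\{r,c\})$, the claimed bound.

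Finally I would verify that the remaining three bracket terms are no larger: the diagonal term $\phi_{ij}(n_{ij})$ costs $O(N\cdot\max\{r,c\})$ overall, and the same-column term $\sum_{i'\neq i}\sum_{\tilde{n}_{i'j}}$ collapses to $O(N)$ per outer iteration and hence $O(N^2\cdot\max\{r,c\})$ in total; both are dominated. The main obstacle is the bookkeeping of the previous paragraph: deciding which index sums may be absorbed into a single factor of $N$ through the marginal constraints on the hypergeometric supports and which must be retained. The slightly asymmetric treatment of the $i'$ and $j'$ sums — collapsing the former but keeping the latter — is exactly what leaves the surviving factor $c\cdot\max\{r,c\}$ and yields the stated complexity.
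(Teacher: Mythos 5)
Your proposal is correct and follows essentially the same route as the paper's proof: bound the support of each nested hypergeometric sum by a cell marginal, telescope via $\sum_i a_i = \sum_j b_j = N$, and identify the $i'\neq i,\ j'\neq j$ term as dominant. The only difference is presentational — you multiply a uniform per-outer-iteration cost $O(cN^2)$ by the $O(N\max\{r,c\})$ outer count, whereas the paper accumulates the bound inside-out as explicit $\max$ expressions — and both arrive at the same (deliberately coarse) $O(N^3 c\cdot\max\{c,r\})$.
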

\noindent Note that the complexity is quadratic in $c$ and linear in $r$. This happens because of the way we decided to condition the probabilities in Eq.\ \eqref{eq:enij2alpha} in the proof of Lemma \ref{lem:analalpha}. With different conditions, it is possible to obtain a formula symmetric to Eq.\ \eqref{eq:enij2alpha} with complexity $O( N^3 r\cdot \max{ \{r, c\} })$ \citep{Romano2014}.

\paragraph{Statistical inference:} All IT measures computed on \emph{partitions} can be seen as estimators of their true value computed using the random \emph{variables} associated to the partitions $U$ and $V$. Therefore, SMI$_q$ can be seen as a non-parametric test for independence for MI$_q$. We formalize this with the following proposition:
\begin{restatable}{prop}{proppval}
The $p$-value associated to the test for independence between $U$ and $V$ using \emph{MI}$_q(U,V)$ is smaller than: $\frac{1}{1 + \left( \textup{SMI}_q(U,V) \right)^2}$.
\end{restatable}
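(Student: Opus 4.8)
The plan is to recognise the required bound as a direct consequence of Cantelli's inequality (the one-sided Chebyshev inequality) applied to the exact null distribution of $\mbox{MI}_q$ under the permutation model. First I would fix the testing framework precisely: under the null hypothesis of independence, the permutation (hypergeometric) model places a finite distribution over all contingency tables $\mathcal{M}$ sharing the fixed marginals $a_i$ and $b_j$, so that $\mbox{MI}_q(U,V)$ becomes a bounded discrete random variable on this space. A large value of $\mbox{MI}_q$ is evidence \emph{against} independence, hence the (one-sided) $p$-value is the upper-tail probability $p = P\big( \mbox{MI}_q \geq m \big)$, where $m$ denotes the value observed on the actual contingency table.

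Next I would rewrite this tail probability in terms of the standardized statistic. Writing $\mu = E[\mbox{MI}_q]$ and $\sigma^2 = \mbox{Var}(\mbox{MI}_q)$ — both finite and given explicitly by Theorem \ref{thm:expvi} and Theorem \ref{thm:varsvi} — the observed standardized value is exactly $\mbox{SMI}_q = (m-\mu)/\sigma$ by Eq.\ \eqref{eq:smianal}. Therefore
\[
p = P\Big( \frac{\mbox{MI}_q - \mu}{\sigma} \geq \mbox{SMI}_q \Big) = P\big( \mbox{MI}_q - \mu \geq \mbox{SMI}_q \cdot \sigma \big).
\]
The key step is then Cantelli's inequality: for any random variable $X$ with mean $\mu$ and finite variance $\sigma^2$ and any $t > 0$, one has $P(X - \mu \geq t) \leq \sigma^2/(\sigma^2 + t^2)$. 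Applying this with $X = \mbox{MI}_q$ and $t = \mbox{SMI}_q \cdot \sigma$ yields
\[
p \leq \frac{\sigma^2}{\sigma^2 + \mbox{SMI}_q^2 \, \sigma^2} = \frac{1}{1 + \big( \mbox{SMI}_q \big)^2},
\]
which is the claimed bound.

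I do not expect a genuine computational obstacle, since the argument reduces to a single substitution into Cantelli's inequality once the exact first two moments are in hand. The only subtle point worth flagging explicitly is the hypothesis $t>0$, i.e.\ $\mbox{SMI}_q > 0$: the bound is the meaningful one precisely when the observed mutual information exceeds its null expectation, which is the only regime in which the test signals dependence. For an observed value at or below the null mean the threshold $t$ is non-positive, Cantelli does not apply, and the upper-tail $p$-value is simply close to $1$; so the statement is naturally understood, and should be stated, under the condition $\mbox{SMI}_q > 0$.
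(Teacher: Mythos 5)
Your proof is correct and follows essentially the same route as the paper's: standardize the null statistic and apply Cantelli's one-sided inequality with $t = \mbox{SMI}_q \cdot \sigma$. Your additional remark that the bound requires $\mbox{SMI}_q > 0$ (since Cantelli needs $t>0$) is a valid caveat that the paper's own proof does not make explicit.
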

\noindent For example, if SMI$_q$ is equal to 4.46 the associated $p$-value is smaller than 0.05. Neural time series data is often analyzed making use of the Shannon MI (e.g.\ see Chapter 29 in~\cite{Cohen2014}). It is common practice to test the independence of two time series by computing SMI via Monte Carlo permutations, sampling from the space of $N!$ cardinality. Our SMI$_q$ can be effectively and efficiently used in this application because it is exact and obtains $O( N^3 r\cdot \max{ \{r, c\} })$ complexity.

\subsection{Experiments on Selection Bias}

In this section, we evaluate the performance of standardized measures on selection bias correction when partitions $U$ are generated at random and independently from the reference partition $V$. This hypothesis has been employed in previous published research to study selection bias~\citep{White1994, Frank98, Dobra01, Shih2004, Hothorn2006,Romano2014}. In particular, we experimentally demonstrate that NMI$_q$ is biased towards the selection of partitions $U$ with more clusters at any $q$. Therefore, in this scenario it is beneficial to perform standardization. Although the choice of whether performing standardization or not is dependent to the application \citep{Romano2015framework}. For example, it has been argued that in some cases the selection of clustering solutions should be biased towards solutions with the same number of clusters as in the reference~\citep{Amelio2015}. In this section we aim to show the effects of selection bias when clusterings are independent and that standardization helps in reducing it. Moreover, we will see in Section \ref{sec:largeobjstd} that it is particularly important to correct for selection bias when the number of records $N$ is small.

Given a reference partition $V$ on $N = 100$ objects with $c = 4$ sets, we generate a pool of random partitions $U$ with $r$ ranging from 2 to 10 sets. Then, we use NMI$_q(U,V)$ to select the closest partition to the reference $V$. The plot at the bottom of Figure \ref{fig:selbias} shows the probability of selection of a partition $U$ with $r$ sets using NMI$_q$ computed on 5000 simulations.
\begin{figure}[h]
\begin{minipage}{0.32\textwidth}
\includegraphics[scale=.65]{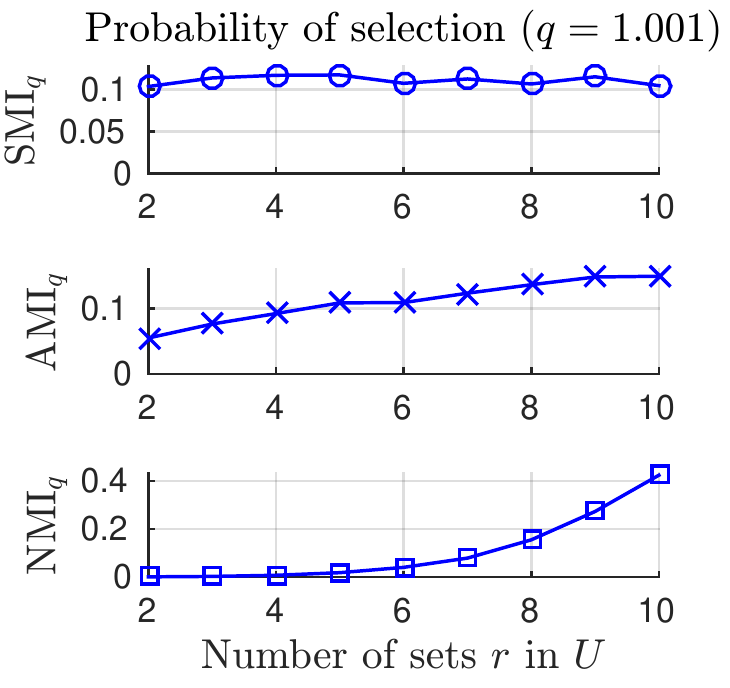}
\end{minipage}
\begin{minipage}{0.32\textwidth}
\includegraphics[scale=.65]{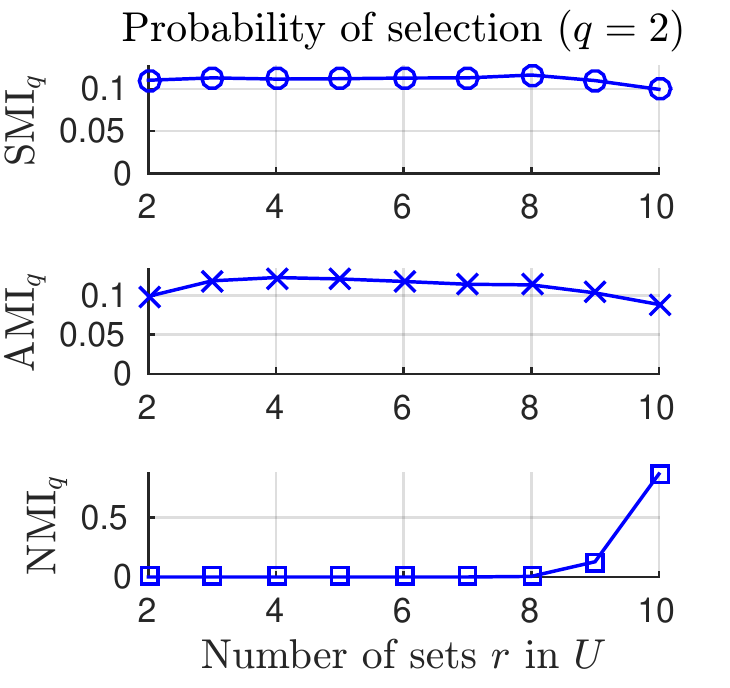}
\end{minipage}
\begin{minipage}{0.33\textwidth}
\includegraphics[scale=.65]{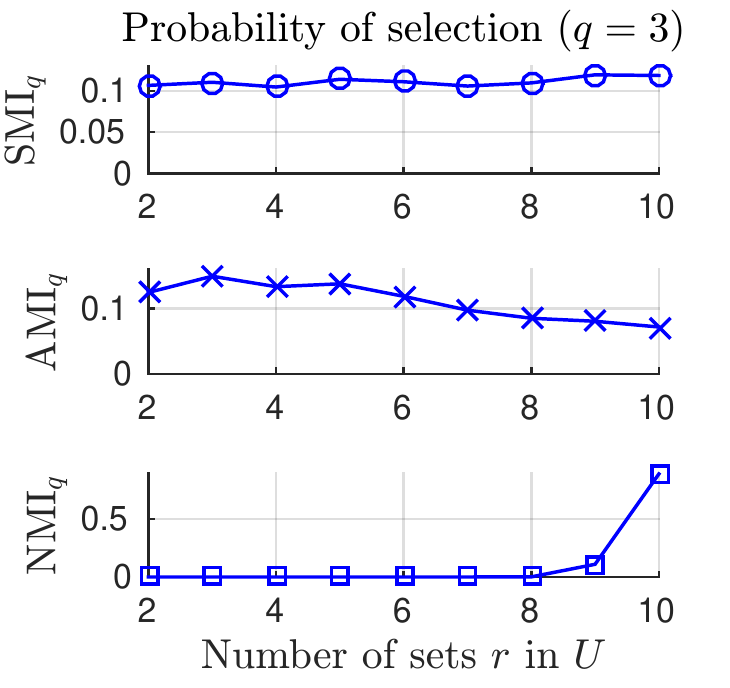}
\end{minipage}
\caption{Selection bias towards partitions with different $r$ when compared to a reference $V$. The probability of selection should be uniform when partitions are random. Using SMI$_q$ we achieve close to uniform probability of selection for $q$ equal to $1.001$, $2$ and $3$ respectively.
}\label{fig:selbias}
\end{figure}
We do not expect any partition to be the best given that they are all generated at random: \emph{i.e.,\ the plot is expected to be flat if a measure is unbiased}. Nonetheless, we see that there is a clear bias towards partitions with 10 sets if we use NMI$_q$ with $q$ respectively equal to 1.001, 2, or 3. We can see that the use of the adjusted measures such as AMI$_q$ helps in decreasing this bias, in particular when $q = 2$. On this experiment when $q=2$, baseline adjustment seems to be effective in decreasing the selection bias because the variance AMI$_2 = \mbox{ARI}$ is almost constant. However for all $q$, using SMI$_q$ we obtain close to uniform probability of selection of each random partition $U$.
 
\subsection{Large Number of Objects} \label{sec:largeobjstd}

It is likely to expect that the variance of generalized IT measures decreases when partitions are generated on a large number of objects $N$.
Here we prove a general result about measures of the family $\mathcal{N}_\phi$.
\begin{restatable}{lem}{lemmavarg} \label{lemma:varg}
If $S(U,V) \in \mathcal{N}_\phi$, then $\lim_{N\rightarrow +\infty} \mbox{\emph{Var}}(S(U,V))=0$.
\end{restatable}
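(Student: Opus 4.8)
The plan is to express the variance as a difference of two expectations, each of which is the mean of a measure in $\mathcal{N}_\phi$, and then apply Lemma~\ref{lemma:largephi} twice. Start from the decomposition
\begin{equation*}
\mbox{Var}(S(U,V)) = E\big[ S(U,V)^2 \big] - \big( E[S(U,V)] \big)^2 ,
\end{equation*}
so it suffices to show that $E[S(U,V)^2]$ and $\big( E[S(U,V)] \big)^2$ tend to a common limit as $N \to +\infty$.

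The crucial observation is that $\mathcal{N}_\phi$ is closed under squaring. If $S(U,V) = \phi\big( \frac{n_{11}}{N}, \dots, \frac{n_{rc}}{N} \big)$ with $\phi$ bounded as $N$ grows, then $S(U,V)^2 = \psi\big( \frac{n_{11}}{N}, \dots, \frac{n_{rc}}{N} \big)$ where $\psi := \phi^2$. Because $\phi$ is bounded, $\psi$ is bounded too, so $S^2$ is again a member of $\mathcal{N}_\phi$. This is the only genuinely new ingredient, and it lets me avoid a direct concentration argument for the individual hypergeometric cells.

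Applying Lemma~\ref{lemma:largephi} to $S$ shows that $E[S(U,V)]$ converges to $\phi\big( \frac{a_1}{N}\frac{b_1}{N}, \dots, \frac{a_r}{N}\frac{b_c}{N} \big)$, and applying the same lemma to $S^2 \in \mathcal{N}_\phi$ shows that $E[S(U,V)^2]$ converges to $\psi\big( \frac{a_1}{N}\frac{b_1}{N}, \dots \big) = \phi\big( \frac{a_1}{N}\frac{b_1}{N}, \dots \big)^2$, i.e.\ to the square of the limit of $E[S(U,V)]$. Substituting these two limits into the variance decomposition, the two terms cancel and $\lim_{N \to +\infty} \mbox{Var}(S(U,V)) = 0$.

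The main obstacle is conceptual rather than computational: recognizing that squaring keeps us inside $\mathcal{N}_\phi$, so that the limit of the second moment is governed by the same formula as the limit of the first moment. Once that is in place, the result is immediate from Lemma~\ref{lemma:largephi}. An alternative but more laborious route would be to prove directly that each $\frac{n_{ij}}{N}$ concentrates on $\frac{a_i b_j}{N^2}$ in probability under the permutation model and then combine the boundedness of $\phi$ with a dominated-convergence argument; the squaring trick short-circuits all of this.
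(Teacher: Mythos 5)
Your proof is correct but takes a genuinely different route from the paper's. The paper proves the lemma directly: it Taylor-expands $\phi(\mathbf{X})$ around the mean vector $\boldsymbol\mu$ of the normalized cells, writes $\mbox{Var}(\phi(\mathbf{X})) \simeq \sum_{t,s}\mbox{Cov}(X_t,X_s)\frac{\partial\phi}{\partial X_t}\frac{\partial\phi}{\partial X_s}$, bounds each covariance by Cauchy--Schwarz, and observes that $\mbox{Var}(n_{ij}/N)\rightarrow 0$ under the permutation model --- essentially re-running the cell-concentration argument already used to prove Lemma \ref{lemma:largephi}. You instead treat Lemma \ref{lemma:largephi} as a black box and exploit the closure of $\mathcal{N}_\phi$ under squaring: $\phi$ bounded implies $\phi^2$ bounded, so $S^2\in\mathcal{N}_\phi$, and two applications of the lemma give $E[S^2]\rightarrow\phi(\boldsymbol\mu)^2$ and $(E[S])^2\rightarrow\phi(\boldsymbol\mu)^2$, whence the variance vanishes. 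Your route is shorter, avoids the second-order Taylor computation entirely, and makes the logical dependence on Lemma \ref{lemma:largephi} explicit rather than duplicating its machinery. The only caveat is that your argument inherits whatever implicit regularity Lemma \ref{lemma:largephi} demands of its $\phi$: the paper's proof of that lemma uses partial derivatives even though the definition of $\mathcal{N}_\phi$ only requires boundedness, so you are implicitly assuming $\phi^2$ satisfies the same hypotheses. Since $\phi^2$ is exactly as smooth as $\phi$, this causes no difficulty, but your proof is only as rigorous as the lemma you invoke.
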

\noindent Given that generalized IT measures belong in the family $\mathcal{N}_\phi$, we can prove the following:
\begin{restatable}{thm}{largeNvarMI} \label{thm:largevarMI}
It holds true that: 
\begin{equation}
\lim_{N \rightarrow +\infty} \mbox{\emph{Var}}(H_q(U,V)) = \lim_{N \rightarrow +\infty} \mbox{\emph{Var}}(\mbox{\emph{MI}}_q(U,V)) = \lim_{N \rightarrow +\infty} \mbox{\emph{Var}}(\mbox{\emph{VI}}_q(U,V)) = 0
\end{equation}
\end{restatable}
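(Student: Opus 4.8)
The plan is to reduce the entire statement to a single application of Lemma~\ref{lemma:varg}, by recognizing that each of the three generalized IT measures is an instance of the family $\mathcal{N}_\phi$ under the fixed-marginal permutation model.

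First I would treat the joint entropy. Writing $x_{ij} = n_{ij}/N$ and recalling $H_q(U,V) = \frac{1}{q-1}\big(1 - \sum_{ij} (n_{ij}/N)^q\big)$, I would exhibit the representing function $\phi(x_{11},\dots,x_{rc}) = \frac{1}{q-1}\big(1 - \sum_{ij} x_{ij}^q\big)$, so that $H_q(U,V)$ has exactly the form demanded by the definition of $\mathcal{N}_\phi$. The step requiring care is the boundedness of $\phi$ as $N \to \infty$: under the permutation model the marginals $a_i, b_j$ — and hence the dimensions $r$ and $c$ — are held fixed, so the sum has a fixed finite number $rc$ of terms; since each $x_{ij} \in [0,1]$ and $t \mapsto t^q$ is bounded on $[0,1]$ for every $q \in \mathbb{R}^+ - \{1\}$, we obtain $0 \le \sum_{ij} x_{ij}^q \le rc$ uniformly in $N$. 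Hence $\phi$ is bounded, $H_q(U,V) \in \mathcal{N}_\phi$, and Lemma~\ref{lemma:varg} yields $\lim_{N\to\infty}\mbox{Var}(H_q(U,V)) = 0$.

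Second, for $\mbox{MI}_q$ and $\mbox{VI}_q$ I would note that fixing the marginals makes $H_q(U)$ and $H_q(V)$ deterministic constants. By Equations~\eqref{eq:mi_beta} and~\eqref{eq:vi_beta}, $\mbox{MI}_q = H_q(U) + H_q(V) - H_q(U,V)$ and $\mbox{VI}_q = 2H_q(U,V) - H_q(U) - H_q(V)$ are affine functions of $H_q(U,V)$; a constant shift and rescaling of a bounded function of the $x_{ij}$ is again a bounded function of the $x_{ij}$, so $\mbox{MI}_q, \mbox{VI}_q \in \mathcal{N}_\phi$ and Lemma~\ref{lemma:varg} applies to each. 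The shortest route, equivalently, is to invoke Theorem~\ref{thm:varsvi}(ii)--(iii), which already records $\mbox{Var}(\mbox{MI}_q) = \mbox{Var}(H_q(U,V))$ and $\mbox{Var}(\mbox{VI}_q) = 4\,\mbox{Var}(H_q(U,V))$; both then vanish in the limit as immediate consequences of the first part.

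I do not anticipate a genuine obstacle, since Lemma~\ref{lemma:varg} carries all of the analytic content. The only non-formal step is confirming membership in $\mathcal{N}_\phi$, i.e.\ the uniform-in-$N$ boundedness of $\phi$; this is precisely where the finiteness of $r,c$ under the fixed-marginal permutation model and the boundedness of $t^q$ on $[0,1]$ are needed, and I would state it explicitly rather than leave it implicit.
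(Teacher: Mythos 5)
Your proposal is correct and follows the same route as the paper: the paper's proof is exactly ``trivially follows from Lemma~\ref{lemma:varg}'' after noting that generalized IT measures belong to $\mathcal{N}_\phi$. Your explicit verification of the boundedness of the representing function $\phi$ and the affine reduction of $\mbox{MI}_q$ and $\mbox{VI}_q$ to $H_q(U,V)$ (or the shortcut via Theorem~\ref{thm:varsvi}) merely spells out details the paper leaves implicit.
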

\noindent Therefore, SMI$_q$ attains very large values when $N$ is large. In practice of course, $N$ is finite, so the use of SMI$_q$ is beneficial. However, it is less important to correct for selection bias if the number of objects $N$ is big with regards to the number of cells in the contingency table in Table \ref{tbl:contingency}: i.e., when $\frac{N}{r \cdot c}$ is large. Indeed, when the number of objects is large AMI$_q$ might be sufficient to avoid selection bias and any test for independence between partitions has high power. In this scenario, SMI$_q$ is not needed and AMI$_q$ might be preferred as it can be computed more efficiently.

\section{Conclusion}

In this paper, we computed the exact expected value and variance of measures of the family $\mathcal{L}_\phi$, which contains generalized IT measures. We also showed how the expected value for measures $S \in \mathcal{N}_\phi$ can be computed for large $N$. Using these statistics, we proposed AMI$_q$ and SMI$_q$ to adjust generalized IT measures both for baseline and for selection bias. AMI$_q$ is a further generalization of well known measures for clustering comparisons such as ARI and AMI. This analysis allowed us to provide guidelines for their best application in different scenarios. In particular ARI might be used as external validation index when the reference clustering shows big equal sized clusters. AMI can be used when the reference clustering is unbalanced and there exist small clusters. The standardized SMI$_q$ can instead be used to correct for selection bias among many possible candidate clustering solutions when the number of objects is small. Furthermore, it can also be used to test the independence between two partitions. All code has been made available online\footnote{ \texttt{https://sites.google.com/site/adjgenit/} }.


\appendix

\newpage

\section{Theorem Proofs}

\propviri*
\begin{proof}
\begin{align*}
\mbox{VI}_q(U,V) &= 2H_q(U,V) - H_q(U) - H_q(V) \\
&= \frac{2}{q - 1}\Big(1 - \sum_{i=1}^r \sum_{j=1}^c \Big( \frac{n_{ij}}{N} \Big)^q \Big)
-\frac{1}{q - 1}\Big(1 - \sum_{i=1}^r \Big( \frac{a_{i}}{N} \Big)^q \Big)
-\frac{1}{q - 1}\Big(1 - \sum_{j=1}^c \Big( \frac{b_{j}}{N} \Big)^q \Big)\\
&= \frac{1}{(q - 1)N^q}\Big(\sum_{i=1}^r a_i^q + \sum_{j=1}^c b_j^q - 2\sum_{i=1}^r \sum_{j=1}^c n_{ij}^q \Big)\\
\end{align*}
When $q = 2$, $\mbox{VI}_2(U,V) = \frac{1}{N^2} (\sum_i a_i^2 + \sum_j b_j^2 - 2\sum_{i,j} n_{ij}^2 ) = \frac{1}{N^2}\mbox{MK}(U,V) = \frac{N-1}{N}(1-\mbox{RI}(U,V))$.
\end{proof}

\lememalpha*
\begin{proof}
The expected value of $S(U,V)$ according to the hypergeometric model of randomness is $E[S(U,V)] = \sum_{\mathcal{M}} S(\mathcal{M})P(\mathcal{M})$ where $\mathcal{M}$ is a contingency table generated via permutations. This is reduced to $E[S(U,V)] = \sum_{\mathcal{M}} (\alpha + \beta  \sum_{ij} \phi_{ij}(n_{ij}) )P(\mathcal{M}) = \alpha + \beta\sum_{\mathcal{M}} \sum_{ij} \phi_{ij}(n_{ij})P(\mathcal{M})$. Because of linearity of the expected value, it is possible to swap the summation over $\mathcal{M}$ and the one over cells obtaining $\alpha + \beta \sum_{ij} \sum_{n_{ij}} \phi_{ij}(n_{ij}) P(n_{ij}) = \alpha + \beta \sum_{ij} E[\phi_{ij}(n_{ij})]$ where $n_{ij}$ is a hypergeometric distribution with the marginals $a_i$, $b_j$, and $N$ as parameters, i.e.\ $n_{ij} \sim \mbox{Hyp}(a_i,b_j,N)$.
\end{proof}


\thmvibeta*
\begin{proof}
The results easily follow from Lemma \ref{lemma:emalpha} and the hypothesis of fixed marginals.
\end{proof}

\adjan*
\begin{proof}
The using the upper bound $\frac{1}{2}(H_q(U) + H_q(V))$ to $\mbox{MI}_q$, $\mbox{AMI}_q$ and $\mbox{AVI}_q$ are equivalent. Therefore we compute $\mbox{AVI}_q$. The denominator is equal to $E[\mbox{VI}_q] = \frac{2}{(q-1)N^q} \Big( \frac{1}{2}(\sum_i a_i^q + \sum_j b_j^q) - \sum_{i,j} E[n_{ij}^q] \Big)$. The numerator is instead $\frac{2}{(q-1)N^q} \Big( \sum_{ij} n_{ij}^q - \sum_{i,j} E[n_{ij}^q] \Big)$.
\end{proof}

\adjspec*
\begin{proof}
Point \emph{i)} follows from the limit of the $q$-entropy when $q \rightarrow 1$. Point \emph{ii)} follows from:
\[
 \mbox{AVI}_2 = \frac{E[\mbox{VI}_2] - \mbox{VI}_2}{E[\mbox{VI}_2] - \min{\mbox{VI}_2}}
=  \frac{ \frac{N-1}{N} ( \mbox{RI} - E[\mbox{RI}])}{ \frac{N-1}{N} ( \max{\mbox{RI}} - E[\mbox{RI}]) } = \mbox{ARI}
\]
\end{proof}

\complexp*
\begin{proof}
The computation of $P(n_{ij})$ where $n_{ij}$ is a hypergeometric distribution $\mbox{Hyp}(a_i,b_j,N)$ is linear in $N$. However, the computation of the expected value $E[n_{ij}^q] = \sum_{n_{ij}} n_{ij}^q P(n_{ij})$ can exploit the fact that $P(n_{ij})$ are computed iteratively: $P(n_{ij}+1) = P(n_{ij}) \frac{(a_i - n_{ij}) (b_j- n_{ij}) } { (n_{ij}+1)(N - a_i - b_j + n_{ij} + 1) }$. We compute $P(n_{ij})$ only for \ $\max{ \{0, a_i + b_j -N \} }$. In both cases $P(n_{ij})$ can be computed in $O(\max{ \{a_i,b_j \} })$. We can compute all other probabilities iteratively as shown above in constant time. Therefore:
\begin{align*}
\sum_{i=1}^{r} \sum_{j=1}^c \left( O(\max{ \{a_i,b_j \} }) + \sum_{n_{ij} = 0}^{\min{ \{a_i,b_j \} }} O(1) \right) &=  \sum_{i=1}^{r} \sum_{j=1}^c O(\max{ \{a_i,b_j \} }) =  \sum_{i=1}^{r} O(\max{ \{c a_i,N \} })\\
&= O(\max{ \{cN,rN\} }) = O(N \cdot \max{ \{c,r\} })
\end{align*}
\end{proof}

\largephi*
\begin{proof}
$S(U,V)$ can be written as $\phi(\frac{n_{11}}{N},\dots,\frac{n_{ij}}{N},\dots,\frac{n_{rc}}{N})$. Let $\mathbf{X} = (X_1,\dots,X_{rc}) =(\frac{n_{11}}{N},\dots,\frac{n_{ij}}{N},\dots,\frac{n_{rc}}{N})$ be a vector of $rc$ random variables where $n_{ij}$ is a hypergeometric distribution with the marginals as parameters: $a_i$, $b_j$ and $N$. The expected value of $\frac{n_{ij}}{N}$ is $E[\frac{n_{ij}}{N}] = \frac{1}{N} \frac{a_i b_j}{N}$. Let $\boldsymbol\mu = (\mu_1,\dots,\mu_{rc})=(E[X_1],\dots,E[X_{rc}]) = (\frac{a_1}{N}\frac{b_1}{N},\dots,\frac{a_i}{N}\frac{b_j}{N},\dots,\frac{a_r}{N}\frac{b_c}{N})$ be the vector of the expected values. The Taylor approximation of $S(U,V) = \phi(\mathbf{X})$ around $\boldsymbol\mu$ is:
\[
\phi(\mathbf{X}) \simeq \phi(\boldsymbol\mu) + \sum_{t=1}^{rc}(X_t - \mu_t)\frac{\partial \phi}{\partial X_t}
+ \frac{1}{2} \sum_{t=1}^{rc} \sum_{s=1}^{rc}(X_t - \mu_t)(X_s - \mu_s)\frac{\partial^2 \phi}{\partial X_t \partial X_s} + \dots
\]
Its expected value is (see Section 4.3 of~\citep{Ang2006}):
\[
E[\phi(\mathbf{X})] \simeq \phi(\boldsymbol\mu) + \frac{1}{2} \sum_{t=1}^{rc} \sum_{s=1}^{rc}\mbox{Cov}(X_t,X_s)\frac{\partial^2 \phi}{\partial X_t \partial X_s} + \dots
\]
We just analyse the second order remainder given that it dominates the higher order ones. Using the Cauchy-Schwartz inequality we have that $|\mbox{Cov}(X_t,X_s)| \leq \sqrt{\mbox{Var}(X_t)\mbox{Var}(X_s)}$. Each $X_t$ and $X_s$ is equal to $\frac{n_{ij}}{N}$ for some indexes $i$ and $j$.  The variance of each $X_t$ and $X_s$ is therefore equal to $\mbox{Var}(\frac{n_{ij}}{N}) = \frac{1}{N^2} \frac{a_i b_j}{N} \frac{N - a_i}{N} \frac{N- b_j}{N - 1}$. When the number of records is large also the marginals increase: $N \rightarrow + \infty \Rightarrow a_i \rightarrow + \infty$, and $b_j \rightarrow + \infty$ $\forall i,j$. However because of the permutation model, all the fractions $\frac{a_i}{N}$ and $\frac{b_j}{N}$ stay constant $\forall i,j$. Therefore, also $\boldsymbol\mu$ is constant. However, at the limit of large $N$, the variance of $\frac{n_{ij}}{N}$ tends to 0: $\mbox{Var}\Big(\frac{n_{ij}}{N}\Big) = \frac{1}{N} \frac{a_i}{N} \frac{b_j}{N} \Big( 1 - \frac{a_i}{N} \Big) \Big( 1 +\frac{1}{N-1} - \frac{b_j}{N} \Big) \rightarrow 0$. Therefore, at large $N$:
\[
E[\phi(\mathbf{X})] \simeq \phi(\boldsymbol\mu) = \phi\Big( \frac{a_1}{N}\frac{b_1}{N},\dots,\frac{a_i}{N}\frac{b_j}{N},\dots,\frac{a_r}{N}\frac{b_c}{N}\Big)
\]
\end{proof}

\largeNmean*
\begin{proof} $E[H_q(U,V)] = \frac{1}{q - 1} \Big( 1 - \sum_{ij}  E \Big[ \Big( \frac{n_{ij}}{N} \Big)^q \Big] \Big)$ and according to Lemma \ref{lemma:largephi} for large $N$: $E[H_q(U,V)] \simeq \frac{1}{q-1} \Big( 1 -  \sum_{ij}  \Big( \frac{ a_i}{N} \frac{b_j}{ N } \Big)^q \Big) = \frac{1}{q-1} \Big( 1 -  \sum_{i}  \Big( \frac{ a_i}{N} \Big)^q \sum_j \Big( \frac{b_j}{ N } \Big)^q \Big)$. If we add an subtract $1 - \sum_i  \Big( \frac{ a_i }{ N } \Big)^q - \sum_j  \Big( \frac{ b_j }{ N } \Big)^q$ in the parenthesis above:
\begin{align*}
E[H_q(U,V)] &\simeq \frac{1}{q -1} \Big( 1 -  \sum_i  \Big( \frac{ a_i }{ N } \Big)^q \sum_j  \Big( \frac{ b_j}{ N } \Big)^q \\
&+ 1 - \sum_i  \Big( \frac{ a_i }{ N } \Big)^q - \sum_j  \Big( \frac{ b_j }{ N } \Big)^q  \\
&- 1 + \sum_i  \Big( \frac{ a_i }{ N } \Big)^q + \sum_j  \Big( \frac{ b_j }{ N } \Big)^q \Big)\\
&= \frac{1}{q-1} \Big( 1 - \sum_{i} \Big( \frac{ a_i }{ N } \Big)^q \Big)
+ \frac{1}{q-1} \Big( 1 - \sum_{j} \Big( \frac{ b_j }{ N } \Big)^q \Big)  \\
&+  \frac{1}{q-1} \Big(  - 1 -  \sum_i  \Big( \frac{ a_i }{ N } \Big)^q \sum_j  \Big( \frac{ b_j}{ N } \Big)^q + \sum_{i} \Big( \frac{ a_i }{ N } \Big)^q + \sum_{j} \Big( \frac{ b_j }{ N } \Big)^q \Big)\\
&= H_q (U) + H_q (V) 
+\frac{1}{q-1} \Big( \Big(1 - \sum_{i} \Big( \frac{ a_i }{ N } \Big)^q \Big) \Big( \sum_{j} \Big( \frac{ b_j }{ N } \Big)^q \Big) \Big) \\
&= H_q(U) + H_q(V) - (q-1) H_q (U)H_q (V)
\end{align*}
Point \emph{ii)} and \emph{iii)} follow from Equations \eqref{eq:mi_beta} and \eqref{eq:vi_beta}.
\end{proof}

\lemanalalpha*
\begin{proof}
The proof follows Theorem 1 proof in~\cite{Romano2014}. Using the properties of the variance we can show that $\mbox{Var}(S(U,V)) = \beta^2\mbox{Var}( \sum_{ij} \phi_{ij}(n_{ij})) = \beta^2\Big( E[(\sum_{ij} \phi_{ij}(n_{ij}))^2] - (\sum_{ij} E[\phi_{ij}(n_{ij})])^2\Big)$. $( E[ \sum_{ij} \phi_{ij}(n_{ij})])^2 = ( \sum_{ij} E[\phi_{ij}(n_{ij})])^2$ can be computed using Eq.\ \eqref{eq:ephinij}. The first term in the sum is instead:
\begin{align*}
&E[ (\sum_{ij} \phi_{ij}(n_{ij}))^2 ] = \sum_{ij}\sum_{i'j'} E[\phi_{ij}(n_{ij})\phi_{i'j'}(n_{i'j'})] = \sum_{ij}\sum_{i'j'} \sum_{n_{ij}} \sum_{n_{i'j'}}\phi_{ij}(n_{ij})\phi_{i'j'}(n_{i'j'}) P(n_{ij},n_{i'j'}) \\
\end{align*}
We cannot find the exact form of the joint probability $P(n_{ij},n_{i'j'})$ thus we rewrite it as $P(n_{ij})P(n_{i'j'}|n_{ij}) = P(n_{ij})P(\tilde{n}_{i'j'})$. The random variable $n_{ij}$ is an hypergeometric distribution that simulates the experiment of sampling without replacement the $a_i$ objects in the set $u_i$ from a total of $N$ objects. Sampling one of the $b_j$ objects from $v_j$ is defined as success: $n_{ij} \sim \mbox{Hyp}(a_i,b_j,N)$. The random variable $\tilde{n}_{i'j'}$ has a different distribution depending on the possible combinations of indexes $i,i',j,j'$. Thus $E[ (\sum_{ij} \phi_{ij}(n_{ij}))^2 ]$ is equal to:
\[\sum_{ij} \sum_{n_{ij}} \sum_{i'j'} \sum_{n_{i'j'}}\phi_{ij}(n_{ij}) \phi_{i'j'}(n_{i'j'}) P(n_{ij},n_{i'j'}) =  \sum_{ij} \sum_{n_{ij}} \phi_{ij}(n_{ij}) P(n_{ij})\sum_{i'j'} \sum_{\tilde{n}_{i'j'}}\phi_{i'j'}(\tilde{n}_{i'j'}) P(\tilde{n}_{i'j'})
\]
which, by taking care of all possible combinations of $i,i',j,j'$, is equal to :
\begin{align}
\sum_{ij} \sum_{ n_{ij}}   \phi_{ij}(n_{ij}) P(n_{ij}) \cdot \Bigg[&
\sum_{i'=i,j'=j} \sum_{\tilde{n}_{ij}} \phi_{ij}(\tilde{n}_{ij}) P(\tilde{n}_{ij}) 
&+ \sum_{i'\neq i, j'=j} \sum_{\tilde{n}_{i'j} } \phi_{i'j}(\tilde{n}_{i'j}) P(\tilde{n}_{i'j}) \\
&+ \sum_{i'=i,j'\neq j}  \sum_{\tilde{n}_{ij'} } \phi_{ij'}(\tilde{n}_{ij'}) P(\tilde{n}_{ij'}) 
&+ \sum_{i' \neq i, j' \neq j} \sum_{\tilde{n}_{i'j'} } \phi_{i'j'}(\tilde{n}_{i'j'}) P(\tilde{n}_{i'j'}) \Bigg] \label{eq:together}
\end{align}
\textbf{Case 1:} {$i'=i \wedge j'=j $}

$P(\tilde{n}_{ij}) = 1$ if and only if $\tilde{n}_{ij} = n_{ij}$ and $0$ otherwise. This case produces the first term $\phi_{ij}(n_{ij})$ enclosed in square brackets.
\\[.5cm]
\textbf{Case 2:} {$i'=i \wedge j' \neq j $}

In this case, the possible successes are the objects from the set $v_{j'}$. We have already sampled $n_{ij}$ objects and we are sampling from the whole set of objects excluding the set $v_j$. Thus, $\tilde{n}_{ij'}\sim \mbox{Hyp}(a_i - n_{ij}, b_{j'}, N - b_j)$.
\\[.5cm]
\textbf{Case 3:} {$i' \neq i \wedge j' = j $}

This case is  symmetric to the previous one where $a_{i'}$ is now the possible number of successes. Therefore $\tilde{n}_{i'j} \sim \mbox{Hyp}(b_j - n_{ij}, a_{i'}, N - a_i)$.
\\[.5cm]
\textbf{Case 4:} {$i' \neq i \wedge j' \neq j $}

In order compute $P(\tilde{n}_{i'j'})$, we have to impose a further condition:
\[
P(\tilde{n}_{i'j'}) = \sum_{ \tilde{n}_{ij'} } P(\tilde{n}_{i'j'}|\tilde{n}_{ij'})P(\tilde{n}_{ij'}) = \sum_{ \tilde{n}_{ij'} } P(\tilde{\tilde{n}}_{i'j'})P(\tilde{n}_{ij'})
\]
We are considering sampling the $a_{i'}$ objects in $u_{i'}$ from the whole set of objects excluding the $a_{i}$ objects from $u_i$. Just knowing that $n_{ij}$ objects have already been sampled from $u_i$ does not allow us to know how many objects from $v_{j'}$ have also been sampled. If we know that $n_{ij'}$ are the number of objects sampled from $v_{j'}$, we know there are $b_{j'} - n_{ij'}$ possible successes and thus $\tilde{n}_{i'j'}|\tilde{n}_{ij'} = \tilde{\tilde{n}}_{i'j'} \sim \mbox{Hyp}(a_{i'},b_{j'} - \tilde{n}_{ij'}, N - a_i)$. So the last two terms in Eq.\ \eqref{eq:together} can be put together:
\begin{align*}
&\sum_{i'=i,j'\neq j}  \sum_{\tilde{n}_{ij'} } \phi_{ij'}(\tilde{n}_{ij'}) P(\tilde{n}_{ij'}) + \sum_{i' \neq i, j' \neq j} \sum_{\tilde{n}_{i'j'} } \phi_{i'j'}(\tilde{n}_{i'j'}) P(\tilde{n}_{i'j'}) \\
&= \sum_{j'\neq j}  \sum_{\tilde{n}_{ij'} } \phi_{ij'}(\tilde{n}_{ij'}) P(\tilde{n}_{ij'}) + \sum_{i' \neq i, j' \neq j} \sum_{\tilde{n}_{i'j'} }\phi_{i'j'}(\tilde{n}_{i'j'}) \sum_{ \tilde{n}_{ij'} } P(\tilde{n}_{i'j'}|\tilde{n}_{ij'})P(\tilde{n}_{ij'}) \\
&= \sum_{j'\neq j}  \sum_{\tilde{n}_{ij'} } \phi_{ij'}(\tilde{n}_{ij'}) P(\tilde{n}_{ij'}) + \sum_{i' \neq i, j' \neq j} \sum_{\tilde{\tilde{n}}_{i'j'} }\phi_{i'j'}(\tilde{\tilde{n}}_{i'j'}) \sum_{ \tilde{n}_{ij'} } P(\tilde{\tilde{n}}_{i'j'})P(\tilde{n}_{ij'}) \\
&= \sum_{j'\neq j}  \sum_{\tilde{n}_{ij'} } P(\tilde{n}_{ij'})\phi_{ij'}(\tilde{n}_{ij'}) + \sum_{j' \neq j} \sum_{ \tilde{n}_{ij'} }P(\tilde{n}_{ij'}) \sum_{i' \neq i} \sum_{\tilde{\tilde{n}}_{i'j'} }\phi_{i'j'}(\tilde{\tilde{n}}_{i'j'})  P(\tilde{\tilde{n}}_{i'j'})\\
&= \sum_{j'\neq j}  \sum_{\tilde{n}_{ij'} } P(\tilde{n}_{ij'}) \Bigg( \phi_{ij'}(\tilde{n}_{ij'}) + \sum_{i' \neq i} \sum_{\tilde{\tilde{n}}_{i'j'} }\phi_{i'j'}(\tilde{\tilde{n}}_{i'j'})  P(\tilde{\tilde{n}}_{i'j'}) \Bigg)
\end{align*}
By putting everything together we get that $E[( \sum_{ij} \phi_{ij}(n_{ij}))^2]$ is equal to:
\begin{align*}
\sum_{ij} \sum_{ n_{ij}}   \phi(n_{ij}) P(n_{ij}) \cdot \Bigg[&  \phi_{ij}(n_{ij}) + \sum_{i'\neq i} \sum_{\tilde{n}_{i'j} }\phi_{i'j}(\tilde{n}_{i'j}) P(\tilde{n}_{i'j}) + \nonumber\\
& + \sum_{j'\neq j}  \sum_{\tilde{n}_{ij'} } P(\tilde{n}_{ij'}) \Bigg( \phi_{ij'}(\tilde{n}_{ij'}) + \sum_{i' \neq i} \sum_{\tilde{\tilde{n}}_{i'j'} }\phi_{i'j'}(\tilde{\tilde{n}}_{i'j'})P(\tilde{\tilde{n}}_{i'j'}) \Bigg) \Bigg]
\end{align*}
\end{proof}


\thmvarsvi*
\begin{proof}
The results follow from Lemma \ref{lem:analalpha}, the hypothesis of fixed marginals and properties of the variance.
\end{proof}

\thmsvianal*
\begin{proof}
\\
For $\mbox{SMI}_q$: the numerator is equal to $H_q(U,V) - E[H_q(U,V)] = \frac{1}{(q-1)N^q} \Big( \sum_{ij} n_{ij}^q - \sum_{i,j} E[n_{ij}^q] \Big)$. According Theorem \ref{thm:varsvi}, the denominator is instead:
\[
\sqrt{\mbox{Var}(\mbox{MI}_q(U,V))} = \sqrt{\mbox{Var}(\mbox{H}_q(U,V))} = \frac{1}{(q -1)N^{q}} \sqrt{ E[ (\sum_{ij} n_{ij}^q)^2 ] - ( E[ \sum_{ij} n_{ij}^q ])^2 }.
\]
For $\mbox{SVI}_q$: the numerator is equal to $2H_q(U,V) - 2E[H_q(U,V)] = \frac{2}{(q-1)N^q} \Big( \sum_{ij} n_{ij}^q - \sum_{i,j} E[n_{ij}^q] \Big)$. According Theorem \ref{thm:varsvi}, the denominator is instead:
\[
\sqrt{\mbox{Var}(\mbox{VI}_q(U,V))} = \sqrt{4\mbox{Var}(\mbox{H}_q(U,V))} = \frac{2}{(q -1)N^{q}} \sqrt{ E[ (\sum_{ij} n_{ij}^q)^2 ] - ( E[ \sum_{ij} n_{ij}^q ])^2 }.
\]
Therefore, $\mbox{SMI}_q$ and $\mbox{SVI}_q$ are equivalent.
\end{proof}

\corspec*
\begin{proof}
Point \emph{i)} follows from the limit of the $q$-entropy when $q \rightarrow 1$ and the linear relation to of $G$-statistic to MI: $G = 2N\mbox{MI}$. Point \emph{ii)} follows from:
\[
 \mbox{SVI}_2 = \frac{E[\mbox{VI}_2] - \mbox{VI}_2}{\sqrt{ \mbox{Var}(\mbox{VI}_2)}}
=  \frac{ \frac{N-1}{N} ( \mbox{RI} - E[\mbox{RI}])}{ \frac{N-1}{N} \sqrt{ \mbox{Var}(\mbox{RI})}} = \mbox{SRI}
\]
\end{proof}

\complvar*
\begin{proof}
Each summation in Eq.\ \eqref{eq:enij2alpha} can be bounded above by the maximum value of the cell marginals and each sum can be done in constant time. The last summation in Eq.\ \eqref{eq:enij2alpha} is:
\begin{align*}
\sum_{j'=1}^c \sum_{\tilde{n}_{ij'} = 0}^{\max{ \{a_i,b_{j'} \} }} \sum_{i'=1}^{r} \sum_{\tilde{\tilde{n}}_{i'j'} = 0}^{\max{ \{a_{i'},b_{j'} \} }} O(1) &= \sum_{j'=1}^c \sum_{\tilde{n}_{ij'} = 0}^{\max{ \{a_i,b_{j'} \} }} O( \max{ \{N,r b_{j'} \} }) \\ &=\sum_{j'=1}^c O( \max{ \{a_i N,a_i r b_{j'}, b_{j'} N , r b_{j'}^2 \} }) \\
&= O( \max{ \{c a_i N,a_i r N, r N^2 \} })
\end{align*}
The above term is thus the computational complexity of the inner loop. Using the same machinery one can prove that:
\begin{align*}
\sum_{j=1}^c \sum_{i=1}^{r} \sum_{n_{ij} = 0}^{\max{ \{a_{i},b_{j} \} }} O( \max{ \{c a_i N,a_i r N, r N^2 \} }) =O( \max{ \{c^2 N^3, rcN^3\} })=O( N^3c \cdot \max{ \{c , r\} })
\end{align*}
\end{proof}

\proppval*
\begin{proof}
Let MI$^0_q$ be the random variable under the null hypothesis of independence between partitions associated to the test statistic MI$_q(U,V)$. The $p$-value is defined as:
\begin{align*}
p\text{-value} &= P\Big( \mbox{MI}^0_q \geq \mbox{MI}_q(U,V) \Big) = P\Big( \mbox{MI}^0_q  - E[\mbox{MI}_q(U,V)] \geq \mbox{MI}_q(U,V) - E[\mbox{MI}_q(U,V)] \Big)\\
&= P\Bigg( \frac{\textup{MI}^0_q  - E[\textup{MI}_q(U,V)]}{\sqrt{ \textup{Var}(\textup{MI}_q(U,V))}} \geq \frac{\textup{MI}_q(U,V)  - E[\textup{MI}_q(U,V)]}{\sqrt{ \textup{Var}(\textup{MI}_q(U,V))}} \Bigg)\\
&= P\Bigg( \frac{\textup{MI}^0_q  - E[\textup{MI}_q(U,V)]}{\sqrt{ \textup{Var}(\textup{MI}_q(U,V))}} \geq \textup{SMI}_q(U,V) \Bigg)\\
\end{align*}
Let $Z$ be the standardized random variable $\frac{\textup{MI}^0_q  - E[\textup{MI}_q(U,V)]}{\sqrt{ \textup{Var}(\textup{MI}_q(U,V))}}$, then using the one side Chebyshev's inequality also known as the Cantelli's inequality~\citep{RossBook}:
\begin{align*}
p\text{-value} = P(Z \geq \textup{SMI}_q(U,V)) < \frac{1}{1 + \Big( \textup{SMI}_q(U,V) \Big)^2}
\end{align*}
\end{proof}

\lemmavarg*
\begin{proof}
Let $\mathbf{X} = (X_1,\dots,X_{rc}) =(\frac{n_{11}}{N},\dots,\frac{n_{ij}}{N},\dots,\frac{n_{rc}}{N})$ be a vector of $rc$ random variables where $n_{ij}$ is a hypergeometric distribution with the marginals as parameters: $a_i$, $b_j$ and $N$. Using the Taylor approximation~\citep{Ang2006} of $S(U,V) = \phi(\mathbf{X})$, it is possible to show that:
\[
\mbox{Var}(\phi(\mathbf{X}))\simeq \sum_{t=1}^{rc} \sum_{s=1}^{rc} \mbox{Cov}(X_t,X_s) \frac{\partial \phi}{\partial X_t} \frac{\partial \phi}{\partial X_s} + \dots
\]
Using the Cauchy-Schwartz inequality we have that $|\mbox{Cov}(X_t,X_s)| \leq \sqrt{\mbox{Var}(X_t)\mbox{Var}(X_s)}$. Each $X_t$ and $X_s$ is equal to $\frac{n_{ij}}{N}$ for some indexes $i$ and $j$.  The variance of each $X_t$ and $X_s$ is therefore equal to $\mbox{Var}(\frac{n_{ij}}{N}) = \frac{1}{N^2} \frac{a_i b_j}{N} \frac{N - a_i}{N} \frac{N- b_j}{N - 1}$. When the number of records is large also the marginals increase: $N \rightarrow + \infty \Rightarrow a_i \rightarrow + \infty$, and $b_j \rightarrow + \infty$ $\forall i,j$. However because of the permutation model, all the fractions $\frac{a_i}{N}$ and $\frac{b_j}{N}$ stay constant $\forall i,j$. Therefore, at the limit of large $N$, the variance of $\frac{n_{ij}}{N}$ tends to 0: $\mbox{Var}\Big(\frac{n_{ij}}{N}\Big) = \frac{1}{N} \frac{a_i}{N} \frac{b_j}{N} \Big( 1 - \frac{a_i}{N} \Big) \Big( 1 +\frac{1}{N-1} - \frac{b_j}{N} \Big) \rightarrow 0$ and thus $\mbox{Var}(\phi(\mathbf{X}))$ tends to $0$.
\end{proof}

\largeNvarMI*
\begin{proof}
Trivially follows from Lemma \ref{lemma:varg}.
\end{proof}

\newpage

\bibliography{mandb} 

\end{document}